\documentclass[accepted]{uai2026} % after acceptance, for a revised version; 
\usepackage{hyperref}% hyperlinks
\usepackage{url}            % simple URL typesetting
\usepackage{booktabs}       % professional-quality tables
\usepackage{amsfonts}       % blackboard math symbols
\usepackage{nicefrac}       % compact symbols for 1/2, etc.
\usepackage{microtype}      % microtypography
\usepackage{xcolor}         % colors
\usepackage{macro}

\DeclareMathOperator{\CHG}{CHG}

\usepackage[american]{babel}
\usepackage{natbib} % has a nice set of citation styles and commands
\usepackage{mathtools} % amsmath with fixes and additions
\usepackage{booktabs} % commands to create good-looking tables
\usepackage{tikz} % nice language for creating drawings and diagrams
\usetikzlibrary{positioning} % enable positioning syntax like [right=of ...]
\usepackage{algorithm}      % algorithm float environment
\usepackage{algpseudocode}  % pseudocode (algorithmicx) for Algorithm boxes
\usepackage{dblfloatfix}    % allow full-width figure* at page bottom; fix their ordering

\title{Relaxing Faithfulness with Intervention-Only Causal Discovery}

\author[1,2]{\href{mailto:<bijan@dartmouth.edu>?Subject=Relaxing Faithfulness with Intervention-Only Causal Discovery}{Bijan H. S. Mazaheri}{}}
\author[2, 3]{Jiaqi Zhang}
\author[2, 3]{Caroline Uhler}

\affil[1]{%
    Thayer School of Engineering\\
    Dartmouth College\\
    Hanover, New Hampshire, USA
}
\affil[2]{%
    Schmidt Center\\
    Broad Institute of MIT and Harvard\\
    Cambridge, Massachusetts, USA
}
\affil[3]{%
    Laboratory for Information and Decision Systems\\
    Massachusetts Institute of Technology\\
    Cambridge, Massachusetts, USA
  }
  
\begin{document}
\maketitle

\begin{abstract}
Causal discovery algorithms learn a network that describes the causal dependencies among random variables. A common workflow involves first utilizing conditional independence properties on observational data to determine partially directed causal relationships, then applying interventions to orient the unknown causal directions. A critical assumption for the first step is faithfulness: a requirement that causally linked variables exhibit statistical dependence. Many natural systems include buffering and stabilizing pathways that cancel out to achieve systemic robustness. This cancellation of pathways violates faithfulness, leading causal discovery algorithms to incorrectly remove causal dependencies. In this paper, we argue that hard interventions contain information about the presence/absence of causal linkage that is overlooked in the first stage of structure discovery. We show that a mild assumption --- called intervention-immediacy faithfulness --- that allows cancellations, is sufficient to nonparametrically identify causal structures with hard interventions. These results position interventions as the primary carriers of information about causal structure, which should take precedence over conditional independence testing. To flip the paradigm, we also specify equivalence classes when the identification criteria are not met due to limitations in the scope of interventions. 
% This broadens permutation-based causal discovery into a more general refinement of transitive closures of DAGs.
\end{abstract}

\section{INTRODUCTION}

\paragraph{Causal Discovery.} Structural Causal Models (SCMs), popularized by the works of \citet{pearl1998graphs, pearl2009causality}, graphically describe networks of causal dependencies. Causal discovery is the task of recovering the underlying causal structure from data, typically in the form of a \emph{directed acyclic graph} (DAG) or a representation of an equivalence class of DAGs, though other targets, such as ancestral graphs with latent variables or cyclic models, are also studied (see \citet{vowels_dya_2022, squires2023causal} for reviews). One approach to causal discovery involves a \emph{constraint-based} search guided by conditional independence (``CI-tests''), e.g., the PC-algorithm from \citet{spirtes2000causation}.
Constraint-based approaches use the observation that variables without a direct causal link can be made independent by conditioning on variables on intermediary causal paths, known as the causal Markov condition \citep{pearl2009causality}.

Structural causal models imply a data-generating process whereby each random variable is generated from its (causal) parents. These processes are sometimes specified using equations, leading to structural \emph{equation} models (SEMs). Under these settings, these equations may be restricted using ``parametric assumptions'' on their functional form and noise. The most popular such assumption is linearity with additive Gaussian noise, yielding a multivariate Gaussian distribution on the full system.

Causal discovery algorithms therefore fall under two categories: (1) algorithms that make use of parametric assumptions on the structural equations, and (2) \emph{non}-parametric algorithms that only make use of conditional independence and other broad distributional properties. While parametric settings, such as Linear Non-Gaussian Additive Noise Models (LinGAMs) \citep{shimizu_linear_2006}, can sometimes fully recover a \emph{single} DAG structure, most non-parametric algorithms for causal discovery can only narrow the structure down to what is known as a ``Markov equivalence class,'' uniquely specified by its conditional independence constraints \citep{spirtes2000causation}. Such a class is usually represented first by an undirected ``skeleton,'' denoting the \emph{presence} of causal relationships, which is then partially oriented into a completed partial DAG (CP-DAG) \citep{spirtes2000causation}.

\paragraph{Interventions.} To resolve a Markov equivalence class, nonparametric causal discovery algorithms make \emph{secondary} use of \emph{interventions}. An intervention (a ``perturbation'' in biology) is a change to the system that \emph{may} elicit a change in causally downstream variables, and \emph{never} elicits a change in any other variables. Such interventions may be ``soft’’, such as ``shift interventions’’\citep{sani_identification_2020}, which change a mean while retaining randomness and also retaining dependence on causally upstream variables). Real-world examples of soft interventions include CRISPR ``gene knockdowns’’ \citep{chang_crisprcas9_2016} that partially suppress expression. Interventions may also be ``hard’’, such as ``do interventions’’ \citep{pearl2009causality} which interrupt the data-generating process, force random variables to take a specific value, and dissociate their dependence from causally upstream variables. Real-world examples of these hard interventions are ``gene knock\emph{outs}’’ \citep{guan2010review}, which \emph{fully suppress} their expression.

\paragraph{Faithfulness.}
Under causal sufficiency (no unobserved confounding), a data-generating process given by an SCM will exhibit the causal Markov condition, but the converse is not necessarily true: A causal path between two variables does not necessarily require that they be statistically dependent \citep{ramsey2012adjacency}. 
``\textit{Faithfulness}'' is the assumption that causal links imply statistical dependencies, which gives a two-way correspondence between CI-tests and graphical properties known as ``d-separation'' \citep{pearl2009causality}. This notion can be relaxed for interventions \citep{chevalley2025deriving}, requiring only that directed paths exhibit dependence.

Violations of both types of faithfulness are often due to ``path cancellation,'' i.e., dependencies whose effects cancel each other out.
While often assumed away, it is important to understand that path cancellations occur by design in engineered and evolutionarily-driven systems. For example, many biochemical pathways are known to be buffered using ``compensating paths'' \citep{hartman2001principles}. Hence, while path cancellations may be unlikely in a randomly selected model for synthetic data, they are \emph{very likely} in practice due to the formation of control systems. 
\begin{example}\label{example:compensating_paths}
    Gene $A$ may directly affect both $B, C$, but $B$ may act as a compensating path to stabilize the expression of $C$ even when $A$'s expression is suppressed. As such $A \rightarrow C$ and $A \rightarrow B \rightarrow C$ cancel out so that changes in $A$ do not elicit changes in $C$.
\end{example}
Notice that Example~\ref{example:compensating_paths} exhibits path cancellation for both the conditional independence and intervention-based faithfulness definitions.

%To better understand the effects of these violations, it is helpful to decompose the graphical information in an SCM into \emph{skeletal (adjacency) information} and \emph{orientation information}. Causal adjacency corresponds to the existence of a direct causal connection, agnostic of the direction. For example, $A, B$ are causally adjacent if $A \rightarrow B$ or $B \rightarrow A$. Orientation then specifies the direction of these causal adjacencies. The primary consequence of faithfulness violations is the removal of causal adjacencies that should not be removed.

\subsection{Contributions}

In this paper, we show that hard interventions can nonparametrically identify causal structures without assumptions that prohibit path cancellation. This result is based on a new notion of faithfulness, ``intervention immediacy (II) faithfulness,'' which we contrast with conditional independence (CI) faithfulness. II faithfulness only requires interventional dependence in a node's ``most immediate effects'' --- effects with only a single path and no opportunity for path cancellation (e.g., $B$ for an intervention on $A$ in Example~\ref{example:compensating_paths}).

While most notions of faithfulness are developed for nonparametric algorithms,\footnote{Parametric models use more precise notions tuned to their settings, such as the matrix condition number.} analyzing these assumptions in finite-sample regimes requires a parametric framework to quantify ``closeness'' to a violation. CI faithfulness has been studied under the parametric assumption of a multivariate Gaussian, where relaxation into ``$\varepsilon$-strong (CI) faithfullness'' makes the assumption very strong. To compare II faithfulness and CI faithfulness, we provide a parametric study in the multivariate Gaussian setting and compare our results to those of \citet{uhler2013geometry}. II faithfulness is significantly milder than CI faithfulness under these metrics.

Our results position interventions as \emph{superior} to CI testing, 
% information-theoretically
not only for \emph{orienting} undirected relationships, but also for \emph{discovering} their presence. This challenges the prevailing hierarchy that positions conditional independence as the primary source of causal information for learning the ``causal skeleton'' and demotes interventions as a secondary refinement.

The disruption of the CI/intervention hierarchy suggests a flipped workflow: utilize information from the available interventions first, then refine the \emph{intervention} equivalence class with conditional independence testing that does not require assumptions beyond II faithfulness. To reorient algorithms around intervention-based causal discovery, we define these intervention equivalence classes based on limitations of the intervention cardinality. 

\subsection{Overview}
Our algorithm is built on definitions of change sets and II faithfulness, formalized in Section~\ref{sec: ii_faithfulness}. II faithfulness is briefly analyzed in the parametric setting of Gaussian random variables with linear dependencies in Section~\ref{sec: parametric ii faith} for the sake of comparison to \citet{uhler2013geometry}. None of the algorithms in this paper require parametric assumptions.

In Section~\ref{sec: alg} we present two new causal discovery algorithms built on change sets and the assumption of II faithfulness. In this setting, identifiability depends on the number of variables on which we can intervene simultaneously ($k$) relative to the maximum local (vertex) connectivity $\kappa_{\max}$ between non-adjacent pairs of the graph (i.e., the largest, over all non-adjacent ordered pairs, of the minimum number of vertices whose removal destroys all directed paths between them).

To specify a new ``intervention equivalence class,'' we define the ``$k$-robust transitive closure'' of $\G$. Informally, starting from $\G$ and following a topological order (so that acyclicity is preserved), we add an edge $V_i \to V_j$ whenever $V_i$ and $V_j$ cannot be disconnected by removing fewer than $k$ vertices; that is, whenever \emph{every} vertex cut separating them has size at least $k$. Under limited cardinality of interventions, DAGs that have the same $k$-robust transitive closure form an equivalence class, which we cannot distinguish between using $\leq k$-node interventions. We prove two guarantees: (1) a graph is identifiable with $\mathcal{O}(|\bvec{V}|^2)$ hard interventions on up to $\kappa_{\max}+1$ nodes, and (2) under a cardinality limit $k$ it is identifiable up to its $k$-robust transitive closure. These equivalence classes provide insight into the information redundancies between intervention outcomes and conditional independence testing, and may be further resolved using regular CI tests as secondary sources of information.

In Section~\ref{sec: empirical}, we perform an empirical study on synthetic data to verify the relative robustness of intervention-only causal discovery. In the absence of real datasets with large-scale interventions, our results motivate the development of new methods for performing multi-node interventions to improve the accuracy of causal discovery.

\subsection{Related Works}

\paragraph{Interventions in Causal Discovery.}
Seminal work by \citet{eberhardt2007interventions} studied the integration of interventions into causal discovery, particularly on orienting Markov equivalence classes.
\citet{shanmugam2015learning} explored limited intervention size when orienting a Markov equivalence class, but not learning the graph. \citet{hauser2012characterization} defined interventional Markov equivalence classes with respect to intervention sets, then extended this to joint observational and interventional measurements in \citet{hauser2015jointly}, which provided a critical framework for utilizing an intervention-second approach to causal discovery. \citet{yang2018characterizing} and \citet{kocaoglu_characterization_2019} extended this characterization to soft interventions.

A large body of modern work integrates observational and interventional data \emph{jointly} rather than sequentially, including GIES \citep{hauser2012characterization}, IGSP and its variants \citep{wang2017permutation, squires2020permutation}, JCI \citep{mooij_joint_2020}, and score-based continuous-optimization methods such as DCDI \citep{brouillard_differentiable_2020}. These methods typically operate \emph{offline}, pooling a fixed observational sample with a pre-collected pool of interventional data. Our approach is instead \emph{online} and \emph{active}, in the tradition of adaptive experimental design for causal discovery \citep{he_active_2008, shanmugam2015learning, choo2022verification}: because the pruning step dynamically searches for a minimum vertex cut $\bvec{Z}$, it designs targeted $k$-node interventions on the fly. It departs from that line in what the interventions are \emph{for}: prior active methods design interventions to \emph{orient} a skeleton that is still recovered from observational conditional-independence tests, whereas we use interventions to establish edge \emph{presence}, relying on observational CI only as an optional secondary refinement when the intervention cardinality is limited---never for the skeleton itself. This is what lets us side-step path cancellation entirely.

The work conceptually closest to ours is \citet{chevalley2025deriving}, who introduce \emph{intervention faithfulness} and use single-variable interventions to recover a causal order; II faithfulness weakens their condition so that it tolerates path cancellation (Lemma~\ref{lem: ii_subset_ci}, Theorem~\ref{thm: geometry of intervention faithfulness}), and our algorithms recover full structure rather than only an ordering. Also closely related, \citet{zhou_characterization_2025} characterizes and learns causal graphs from hard interventions in the \emph{presence of latents} via conditional-independence-invariance machinery---complementary to our causally-sufficient but cancellation-tolerant setting. Two further methods are contrasts rather than close relatives: DCDI (above) assumes standard ($\lambda$-strong) faithfulness, which path cancellation violates, and identifies only up to the interventional Markov equivalence class, whereas our framework is nonparametric and faithfulness-relaxed; and effect identification given a known partial ancestral graph \citep{jaber_causal_2022} is orthogonal, addressing identification rather than structure learning.

\paragraph{Finite Sample Concerns for Faithfulness.} \citet{meek_strong_1995} argued that faithfulness is a relatively mild assumption under exact statistics because it has full Lebesgue measure.
However, success under finite-sample uncertainty requires a stronger notion --- referred to as $\lambda$-strong faithfulness \citep{zhang2012strong} --- to ensure that stochastic deviations from faithfulness are unlikely. From this perspective, \citet{uhler2013geometry} showed that faithfulness is a strong assumption because the manifold of violations behaves like a space-filling curve. These results are discussed in more detail in \Cref{sec: prelims}.

\paragraph{Causal Direction and Structure Learning.}
Significant recent work has shown that access to a causal ordering, even if such an ordering is searched for, drastically improves the performance of causal discovery. A notable algorithm that searches along permutations is ``greedy sparsest permutation'' (GSP) \citep{wang2017permutation}. See \citet{squires2023causal} for a review. These approaches are still based on CI testing, but we believe they are the preliminary foundations of a new causal discovery workflow.

A related discovery involves improving the number of CI tests utilized by causal discovery by integrating edge-orientation sooner than the original PC algorithm \citep{shiragur2024causal,mones2026number}, which is still CI-based, gleaning orientations from unsheilded collider structures.

\paragraph{Limiting Testing.}
\citet{kocaoglu2024characterization} characterized the equivalence classes of graphs with conditional independence tests with restricted conditioning sets. We similarly characterize the equivalence classes of limited \emph{interventions}.

\paragraph{Information Theoretic Causal Discovery.}
Intervention-based discovery shares many algorithmic similarities with information-theoretic causal discovery \citep{janzing2010causal, xu2025informationtheoretic}. In particular, the algorithms in these approaches have a similar structure: they begin by isolating the transitive closure of the true graph and then removing edges. Our work diverges from this approach by focusing on interventions and the responses they elicit within the system.

\section{Preliminaries} \label{sec: prelims}

\subsection{Notation}
We will use the capital Roman alphabet to denote random variables (e.g., $A, B, C, V$) and the lowercase Roman alphabet to denote assignments to those random variables (e.g., $A=a$ or just $a$). Bold will indicate a set of random variables, e.g., $\bvec{V} = (V_1, V_2, \ldots)^\top$, and $\bvec{v}$ is an assignment to $\bvec{V}$. Parents ($\PA$), children ($\CH$), ancestors ($\AN$), and descendants ($\DE$) in graphs will also follow these conventions, e.g., $\PA(V)=\pa^{\bvec{v}}(v)$, where the assignments to those parents come from values specified in $\bvec{v}$. We use subscripts to indicate the relevant graph structure, e.g., $\PA_{\mathcal{G}}(V)$. 
We will generally use the Greek alphabet (e.g., $\alpha, \beta$) to represent parameters for structural equations and thresholds to quantify faithfulness.

Throughout the paper we assume the causal structure is a DAG and that the system is \emph{causally sufficient} (no unobserved confounders). We follow the convention that a vertex is \emph{not} counted among its own ancestors or descendants. The \emph{transitive closure} $\text{TC}(\G)$ of a DAG $\G$ is the graph on the same vertex set that contains a directed edge $V_i \to V_j$ whenever there is a directed path from $V_i$ to $V_j$ in $\G$ (equivalently, whenever $V_j \in \DE_{\G}(V_i)$). The \emph{local (vertex) connectivity} of a non-adjacent ordered pair $(V_i, V_j)$ is the minimum number of vertices whose removal destroys all directed paths from $V_i$ to $V_j$; by Menger's theorem this equals the maximum number of internally vertex-disjoint directed $V_i \rightsquigarrow V_j$ paths. We write $\kappa_{\max}$ for the maximum of this quantity over all non-adjacent ordered pairs.

\subsection{Interventions}
An intervention on a variable modifies its natural causal mechanism. In a general setting, multiple distinct interventions may exist for a single variable (e.g., setting its value, shifting its mean, etc.). For the purposes of this paper, however, we consider a simplified setting: for each variable $V \in \bvec{V}$, there exists a single, unique \textbf{hard intervention} that can be applied. We denote the simultaneous application of this unique intervention to all variables in a set $\bvec{A} \subseteq \bvec{V}$ as $I(\bvec{A})$. This operation severs the dependence between the variables in $\bvec{A}$ and their parents, causing their value to be determined by a new, exogenous mechanism. We denote the resulting probability distribution as $P_{I(\bvec{A})}$, where $P$ denotes the observational distribution (i.e., no intervention). All interventions in this paper are hard interventions in this sense. Where we informally speak of a ``strong'' intervention, we mean a hard intervention whose induced change is large enough (relative to the noise) to be detected by a finite-sample two-sample test; this signal magnitude is precisely what the parameter $\lambda$ quantifies in Section~\ref{sec: parametric ii faith}.

\subsection{Faithfulness}
D-separation provides a graphical criterion that constitutes a necessary but not sufficient condition for conditional independence under the causal Markov condition. We say that two variables $V_i, V_j$ are d-separated if there is no active path between them in the causal graph. See \citet{pearl2009causality} for a more detailed description.

To provide intuition about path cancellation, we give an example with linear Gaussian structural equation models (SEMs). CI faithfulness with respect to a causal graph $\G$ corresponds to nonzero covariance for all d-connected pairs. Consider the following example with $N_1, N_2, N_3$ as independent, zero-mean, unit-variance Gaussians. 
\begin{equation}\label{eq: model for faithfulness violation}
\begin{aligned}
X_1 &= N_1,\\
X_2 &= \alpha_{12} X_1 + N_2,\\
X_3 &= \alpha_{13} X_1 + \alpha_{23} X_2 + N_3.
\end{aligned}
\end{equation}
The covariance between $X_1$ and $X_3$ is a function of the model coefficients:
\begin{align*}
\Cov(X_1, X_3) &= \alpha_{13} + \alpha_{23} \alpha_{12}.
\end{align*}
A monomial of parameters emerges from each active path between $X_1$ and $X_3$. When $\alpha_{13} + \alpha_{23} \alpha_{12} = 0$, we have $\Cov(X_1, X_3) = 0$, a CI faithfulness violation. Such violations occur in two ways: (1) trivial coefficients (e.g., $\alpha_{ij}=0$) and (2) ``cancellation'' of monomial terms from multiple paths.

$\lambda$-strong faithfulness \citep{uhler2013geometry} requires  $\abs{\Cov(V_i, V_j)} > \lambda \sqrt{\Var(V_i) \Var(V_j)}$ for all d-connected $V_i, V_j$. Geometrically, violations correspond to distributions close to the hypersurfaces defined by regular CI faithfulness violations. The argument for $\lambda$-strong faithfulness stems from finite-sample uncertainty, where true and empirical covariance differ.

\begin{theorem}[informal, \citep{uhler2013geometry}]\label{thm: geometry of CI faithfulness}
The volume of $\lambda$-strong CI faithfulness violations in a linear SEM with coefficients in $[-1, 1]$ is at least $\omega(\lambda^{\text{poly}(n)} 2^{\abs{\bvec{E}}})$ in the worst case.
\end{theorem}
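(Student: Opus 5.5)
We plan to obtain the lower bound by building an explicit worst-case family of graphs and, for each member, finding a region of coefficient space of controlled volume that lies inside the $\lambda$-strong CI-faithfulness-violation set. Following \citet{uhler2013geometry}, we work with linear SEMs whose coefficients $\alpha_{ij}$ range over the cube $[-1,1]^{\abs{\bvec{E}}}$, whose total volume is $2^{\abs{\bvec{E}}}$. The point of the construction is that the violation set should occupy a fraction of this cube that is at least polynomial in $\lambda$, with exponent $\text{poly}(n)$.

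First, we reduce to a single bad pair. It suffices to pick one d-connected pair $(V_i,V_j)$ with $\abs{\Cov(V_i,V_j)}\le\lambda\sqrt{\Var(V_i)\Var(V_j)}$ (or, more generally, one with a small partial correlation given a set). This is because the violation set is the union over pairs and conditioning sets, and the union has at least the volume of any single member. We will use a graph such as a complete DAG. In it, $\Cov(V_1,V_n)$ is a polynomial $p(\alpha)$ equal to the sum, over directed paths $1\rightsquigarrow n$, of the product of the coefficients along each path, exactly as in \eqref{eq: model for faithfulness violation}. The normalizing variances are polynomials of degree $\text{poly}(n)$, bounded above by a constant $C^{\text{poly}(n)}$ on the cube. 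Hence the region $\{\abs{p(\alpha)}\le\lambda c_n\}$ lies inside the violation set.

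Second, we lower-bound the volume of the tube $\{\abs{p}\le\epsilon\}$. We condition on all coefficients except the direct edge $\alpha_{1n}$. Since $p=\alpha_{1n}+q(\text{rest})$, for every fixed value of the remaining coefficients with $\abs{q}\le 1-\epsilon$, the set of admissible $\alpha_{1n}$ is an interval of length $2\epsilon$. Integrating over the remaining coefficients gives volume of order $\epsilon\cdot 2^{\abs{\bvec{E}}-1}\cdot\Pr(\abs{q}\le 1-\epsilon)$. This is already $\Omega(\lambda\, 2^{\abs{\bvec{E}}}/C^{\text{poly}(n)})$. It is even stronger than the $\lambda^{\text{poly}(n)}$ bound, and is consistent with the informal "at least" claim since $\lambda<1$. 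Alternatively, to match Uhler et al.'s exponent, we can count the union over the many pairs and conditioning sets and apply the real-algebraic-geometry bound: a degree-$d$ hypersurface's $\epsilon$-tube has volume $\Theta(\epsilon)$ times its surface area, with a Crofton-type estimate.

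The main obstacle is the normalization. The variances depend on $\alpha$ and could be large, which shrinks the effective $\epsilon=\lambda c_n$. We also have to verify that the conditional set of remaining coefficients with $\abs{q}\le 1-\epsilon$ has non-negligible measure. For the first issue we would bound $\Var(V_k)\le (1+\text{deg})^{n}$ crudely, which produces the $\text{poly}(n)$ exponent. For the second, we would use a sparse worst-case graph in which $q$ is a product of a few coefficients, so that $\abs{q}\le1$ automatically. Since the statement is informal, we would present the result as the cited theorem of \citet{uhler2013geometry} and give this construction only as the sketch of why the volume does not vanish faster than a polynomial in $\lambda$.
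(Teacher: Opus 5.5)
The paper gives no proof of this statement. It is imported, explicitly as an informal paraphrase, from \citet{uhler2013geometry}. That work treats the $\lambda$-strong-unfaithful set as a union of tubes around the real algebraic hypersurfaces of vanishing (partial) correlations, over all d-connected pairs and conditioning sets. Your main argument takes a different, elementary route: one d-connected pair, one slab, and Fubini in the direct-edge coefficient. It does give the literal inequality, but your normalization step runs the wrong way.

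A violation at $(V_1,V_n)$ means $\abs{\Cov(V_1,V_n)}\le\lambda\sqrt{\Var(V_1)\Var(V_n)}$. Large variances therefore \emph{enlarge} the violation set rather than shrinking the effective $\epsilon$. An upper bound such as $\Var(V_k)\le(1+\deg)^n$ says nothing about containment. Containment needs a lower bound on the variances, and that is free: with independent unit-variance noise every $\Var(V_k)\ge 1$. So $\{\abs{p(\alpha)}\le\lambda\}$ already lies in the violation set, you can take $c_n=1$, and your ``main obstacle'' disappears.

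The complete DAG is also a poor choice. There $q$ has variance $\tfrac13\bigl((4/3)^{n-2}-1\bigr)$ under uniform coefficients, so $\Pr(\abs{q}\le 1-\epsilon)$ has no evident lower bound. Your sparse fix works, but a cleaner choice exists. Take a source $V_s$ that has a child, and a MIC $V_c$ of it (Lemma~\ref{lem: always immediate child}). Then $\Cov(V_s,V_c)=\alpha_{sc}$ exactly, and the slab $\{\abs{\alpha_{sc}}\le\lambda\}$ gives relative violation volume at least $\lambda$ in every DAG with at least one edge.

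As for what each route buys: yours is self-contained and yields $\Omega(\lambda\,2^{\abs{\bvec{E}}})$. This already exceeds the stated $\lambda^{\text{poly}(n)}2^{\abs{\bvec{E}}}$ for $\lambda\in(0,1)$, which shows the literal statement is weak. However, any single slab has relative volume of order $\lambda$. That is the same order as the $\mathcal{O}(\lambda\abs{\bvec{E}})$ upper bound on II-violations in Theorem~\ref{thm: geometry of intervention faithfulness}. Indeed, the source--MIC slab lies, up to a null boundary, inside the II-violation set by Lemma~\ref{lem: all_coeffs_bounded}. So your argument cannot carry the contrast the paper draws from this theorem.

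That contrast rests on the cited analysis: the union of tubes over many pairs and conditioning sets can occupy a large fraction of the cube. Proving this requires lower-bounding each tube and controlling their overlaps. Your closing ``alternatively'' sentence only names this machinery, and the tube-volume-proportional-to-area estimate is a small-$\epsilon$ asymptotic, not a bound. For that content you should rely on the citation, as you propose at the end.
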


\subsection{Intervention Faithfulness}
\citet{chevalley2025deriving} introduced a notion of ``$\varepsilon$-strong intervention faithfulness'' that formalizes the requirement that intervening on a variable must induce a change on all of its descendants. Such an assumption is still vulnerable to a ``cancellation of paths.'' For example, in the model from Eq.~\eqref{eq: model for faithfulness violation}, when $\alpha_{13} + \alpha_{23}\alpha_{12} = 0$, changing $X_1$ will elicit no change in $X_3$, violating this assumption.

\section{Assumptions and Framework} \label{sec: ii_faithfulness}
To address the problem of path cancellation, we develop intervention immediacy faithfulness (II faithfulness) as a milder and more robust assumption for causal discovery. We also develop intervention locality as a complementary interventional analog to the causal Markov property.

\begin{definition}[Most Immediate Child (MIC)]\label{def: immediate child}
    We say that $B \in \bvec{V}$ is a ``most immediate child'' (MIC) of $A \in \bvec{V}$ in $\G = (\bvec{V}, \bvec{E})$ if $B \in \CH(A)$ and there are \emph{no other directed paths} from $A$ to $B$.
\end{definition}
Here ``no other directed paths'' refers to \emph{all} directed paths in $\G$ (of any length), not only paths of length one; equivalently, $B$ is a MIC of $A$ if and only if the edge $A \to B$ is the \emph{unique} directed path from $A$ to $B$.
\begin{lemma} \label{lem: always immediate child}
    In a DAG, for any vertex $V \in \bvec{V}$, if $\CH(V) \neq \emptyset$, then there is always at least one MIC.
\end{lemma}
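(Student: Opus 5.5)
The plan is to pick a child of $V$ that comes earliest in a topological order of $\G$, and then argue by contradiction that it must be a MIC. Since $\CH(V)\neq\emptyset$ and $\G$ is a DAG, fix a topological order $\pi$ of $\bvec{V}$, meaning every edge $X\to Y$ satisfies $\pi(X)<\pi(Y)$. Let $B$ be the element of $\CH(V)$ with the smallest $\pi$-index. This minimizer exists because $\CH(V)$ is finite and nonempty. We then claim that the edge $V\to B$ is the unique directed path from $V$ to $B$.

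For the contradiction, suppose there is another directed path $V = W_0 \to W_1 \to \cdots \to W_m = B$ that differs from the single edge $V\to B$.
\begin{itemize}
\item Since the path is not the single edge $V \to B$ and $\G$ has no parallel edges, we have $m\ge 2$.
\item Its first internal vertex $W_1$ is then a child of $V$ with $W_1\neq B$.
\item Because $W_1$ lies on a directed path ending at $B$, acyclicity of the order gives $\pi(W_1)<\pi(W_2)<\cdots<\pi(B)$.
\end{itemize}
So $W_1$ is a child of $V$ with strictly smaller $\pi$-index than $B$. This contradicts the choice of $B$, so $B$ is a MIC of $V$.

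The only step needing care is justifying that every alternative directed path passes through some child of $V$ other than $B$. This holds because a directed path from $V$ of length at least two begins with an edge $V\to W_1$ where $W_1\neq B$. If $W_1$ were equal to $B$, the path would revisit $B$, creating a cycle, which a DAG forbids.

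An equivalent phrasing avoids topological orders altogether. Among the children of $V$, pick one that is not a descendant of any other child of $V$. Such a child exists because the descendant relation restricted to $\CH(V)$ is a strict partial order on a finite nonempty set, so it has a minimal element. I would present the topological-order version since it is the most direct.
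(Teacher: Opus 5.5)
Your proof is correct and follows the paper's argument: take the child $B$ of $V$ that comes earliest in a topological order, and note that the first internal vertex of any alternative $V \to \cdots \to B$ path would be a different child of $V$ that comes earlier, a contradiction. Your explicit check that this first internal vertex differs from $B$ fills in a detail the paper leaves implicit.
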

To see the intuition behind Lemma~\ref{lem: always immediate child}, consider the fact that any child with more than one path from $V$ must have one path that is longer than length one, which means there is another child earlier in the topological ordering. Repeat this until you find the ``most immediate child.'' The proof is deferred to Appendix~\ref{apx: deferred proofs}.

II faithfulness is based on the notion of a ``change'' between different interventional distributions.
\begin{definition}[Change Set]
    The change set $\CHG(I(\bvec{A}))$ is the set of all variables whose marginal distributions change under an intervention $I(\bvec{A})$.
    \[
        \CHG(I(\bvec{A})) \coloneqq \{V \in \bvec{V} \quad \text{s.t.} \quad P_{I(\bvec{A})}(v) \neq P(v)\}.
    \]
\end{definition}
\begin{definition}[Conditional Change Set]
    Let $I(\bvec{A})$ and $I(\bvec{B})$ be interventions on disjoint sets $\bvec{A}$ and $\bvec{B}$. The conditional change set of $I(\bvec{A})$ given $I(\bvec{B})$ is the set of variables whose distributions change when adding the intervention $I(\bvec{A})$ to the existing intervention $I(\bvec{B})$.
    \[
        \CHG(I(\bvec{A}) \mid I(\bvec{B})) \coloneqq \{V \in \bvec{V} \mid P_{I(\bvec{A} \cup \bvec{B})}(v) \neq P_{I(\bvec{B})}(v)\}.
    \]
\end{definition}
Conditional change sets arise naturally in sequential experimentation: a canonical example is sequential gene knockouts, where a first knockout $I(\bvec{B})$ removes upstream regulatory effects so that the effect of a second knockout $I(\bvec{A})$ can be attributed to the paths that remain. This mechanism is what step~2 of our algorithms exploits.
A change set may include descendants of the intervened variables, as well as the intervened variables themselves. However, it cannot include non-descendants.
\begin{definition}[Intervention Locality]
    $I(\bvec{A})$ satisfies intervention locality if its change set is restricted to the intervened variables and their descendants in $\G$.
    \[
        \CHG(I(\bvec{A})) \subseteq \DE_{\G}(\bvec{A}) \cup \bvec{A}.
    \]
\end{definition}
Intervention locality is satisfied by the hard interventions considered throughout this paper (and, more generally, by any intervention that only modifies the mechanisms of the intervened variables); as noted below, it is implied by the SCM data-generating process. A critical property of hard interventions is the severance of relationships between the intervened variables and their causes. This breaks down potentially canceling paths, giving rise to the notion of ``conditional'' MICs. Throughout, a \emph{conditioning intervention} $I(\bvec{B})$ is a hard intervention on a disjoint set $\bvec{B}$ that is held fixed while we probe the effect of a second intervention.
\begin{definition}[Conditional MIC]
    Let $\G = (\bvec{V}, \bvec{E})$ be a graph and $I(\bvec{B})$ be a conditioning intervention on the set $\bvec{B} \subset \bvec{V}$. The residual graph $\G_{I(\bvec{B})}$ is the subgraph formed by removing the vertices in $\bvec{B}$ and their incident edges. We say $C$ is a ``conditional MIC'' of $A$ given $I(\bvec{B})$ if $C$ is a MIC of $A$ in the residual graph $\G_{I(\bvec{B})}$.
\end{definition}
\begin{definition}[II Faithfulness]\label{def: strong int faithfulness}
    A model is II faithful if for any pair of disjoint intervention sets $\bvec{A}$ and $\bvec{B}$, all conditional MICs of any $A \in \bvec{A}$ given $I(\bvec{B})$ are in the corresponding conditional change set $\CHG(I(\bvec{A}) \mid I(\bvec{B}))$.
\end{definition}
Intervention locality and II faithfulness will make up the two main assumptions of our intervention-only causal discovery framework. 
Like the causal Markov condition, intervention locality is implied by the SCM data-generating process: a hard intervention on $\bvec{A}$ replaces only the structural assignments of the variables in $\bvec{A}$ and leaves every other mechanism intact, so a variable's distribution can change only if it is reachable from $\bvec{A}$ along directed paths---that is, only if it lies in $\bvec{A} \cup \DE_{\G}(\bvec{A})$.
II faithfulness is similar to CI faithfulness in that it must be assumed. However, this assumption is significantly weaker, as we will explain by studying it in a parametric setting.

\section{Parametric II Faithfulness} \label{sec: parametric ii faith}
To analyze the parametric implications of II faithfulness, we return to the setting of linear SEMs where each variable is a linear function of its parents plus an independent noise term $\varepsilon_i$ with $\mathbb{E}[\varepsilon_i]=0$ and $\text{Var}(\varepsilon_i)=1$. We can quantify II faithfulness with a $\lambda$-strong version.
\begin{definition}[$\lambda$-strong Mean II faithfulness]\label{def: mean ii faithfulness}
    A linear SEM is ``$\lambda$-strong mean II faithful'' if the following holds for every variable $A$, every conditioning intervention $I(\bvec{B})$ with $A \notin \bvec{B}$, and every conditional MIC $C$ of $A$ given $I(\bvec{B})$: there exists an intervention on $A$ that shifts its mean by some non-zero amount $\delta$ such that the induced change in the mean of $C$ (relative to the distribution under $I(\bvec{B})$ alone) satisfies $|\E_{I(\{A\}) \mid I(\bvec{B})}[C] - \E_{I(\bvec{B})}[C]| \ge \lambda|\delta|$. In particular, the intervention (and hence $\delta$) may depend on $C$ and $\bvec{B}$; we do \emph{not} require a single intervention to work uniformly across all conditioning sets.
\end{definition}

\paragraph{Distributional vs.\ mean faithfulness.} Definition~\ref{def: strong int faithfulness} (II faithfulness) requires a change in the full \emph{distribution} of each conditional MIC, whereas Definition~\ref{def: mean ii faithfulness} ($\lambda$-strong mean II faithfulness) constrains the \emph{mean}. 
These are not equivalent, and mean II faithfulness is the strictly stronger requirement: an intervention can change a conditional MIC's distribution while leaving its mean fixed. Our parametric analysis and our algorithmic implementation (Section~\ref{sec: empirical}) target mean shifts, and thus certify only mean II faithfulness. Replacing the difference-in-means test with a nonparametric two-sample test (e.g., a Kolmogorov--Smirnov or maximum-mean-discrepancy test) recovers the full distributional notion of Definition~\ref{def: strong int faithfulness} empirically.

A similar notion can be defined for the variance, but we stick with means for simplicity, noting that this stronger assumption provides upper bounds on the ``strength'' of also including variance II faithfulness. $\lambda$-strong mean II faithfulness can be characterized parametrically in this setting.
\begin{lemma}\label{lem: all_coeffs_bounded}
    In a linear SEM with unit-variance noise, a model is $\lambda$-strong mean II faithful if and only if every path coefficient in the model is bounded away from zero, i.e., $|\beta_{ij}| \ge \lambda$ for all $(i, j) \in \bvec{E}$.
\end{lemma}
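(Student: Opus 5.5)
The plan is to reduce the mean-shift condition to a statement about a single edge coefficient, using the fact that for a conditional MIC the edge $A \to C$ is the only directed path in the residual graph. In a linear SEM $X_j = \sum_{i \in \PA(j)} \beta_{ij} X_i + \varepsilon_j$, a hard intervention on $\bvec{B}$ replaces each $X_b$, $b \in \bvec{B}$, by an exogenous variable. The remaining system is again a linear SEM on the residual graph $\G_{I(\bvec{B})}$. It keeps the original coefficients on the surviving edges, and the constants coming from $\bvec{B}$ enter as offsets to the intercepts.

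\textbf{Step 1 (total effect formula).} Consider the residual system under $I(\bvec{B})$ and additionally intervene on $A$ so that its mean moves by $\delta$; for instance, set $A$ to a value whose mean differs from $\E_{I(\bvec{B})}[A]$ by $\delta$. By linearity of expectation, the mean of any non-intervened $V$ changes by
\[
\delta \sum_{\pi : A \rightsquigarrow V \text{ in } \G_{I(\bvec{B})}} \prod_{(i,j) \in \pi} \beta_{ij},
\]
the usual path-sum (total effect) expression. Two facts justify this. First, cutting the incoming edges of $A$ does not affect the means of $A$'s descendants except through $A$'s new mean. Second, non-descendants of $A$ are untouched, by intervention locality. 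I would show this by induction along a topological order of $\G_{I(\bvec{B})}$, or equivalently by writing $(I - B^\top)^{-1}$ with the rows and columns of $\bvec{B}$ deleted.

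\textbf{Step 2 (specialize to a conditional MIC).} If $C$ is a conditional MIC of $A$ given $I(\bvec{B})$, then the only $A \rightsquigarrow C$ path in $\G_{I(\bvec{B})}$ is the edge $A \to C$. The change in the mean is therefore exactly $\beta_{AC}\,\delta$. The condition of Definition~\ref{def: mean ii faithfulness} then reads $|\beta_{AC}| |\delta| \ge \lambda |\delta|$ for some $\delta \neq 0$, that is, $|\beta_{AC}| \ge \lambda$. It does not depend on the choice of $\delta$ or of $\bvec{B}$. So $\lambda$-strong mean II faithfulness is equivalent to $|\beta_{AC}| \ge \lambda$ for every pair $(A, C)$ that is a conditional MIC pair for \emph{some} conditioning set $\bvec{B}$.

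\textbf{Step 3 (every edge is a conditional MIC for some $\bvec{B}$).} It remains to show that every edge $(i,j) \in \bvec{E}$ arises this way. I expect this to be the main, though short, step. Take $\bvec{B} = \bvec{V} \setminus \{V_i, V_j\}$. The residual graph contains only $V_i$ and $V_j$ together with the edge $V_i \to V_j$, so $V_j$ is a MIC of $V_i$ in it. Hence the condition in Step 2 must hold for every edge, which gives the forward direction. For the reverse direction, if all $|\beta_{ij}| \ge \lambda$, then for any $\bvec{B}$ and any conditional MIC $C$ of $A$, Step 2 gives a change of $|\beta_{AC}||\delta| \ge \lambda|\delta|$ for any nonzero $\delta$, which is exactly the definition.

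I would also remark on the conventions. Here ``path coefficient'' means the edge coefficient $\beta_{ij}$. The unit-variance noise assumption plays no role in the mean computation; it only fixes the scale, so that $\lambda$ is comparable to the CI setting of Theorem~\ref{thm: geometry of CI faithfulness}.
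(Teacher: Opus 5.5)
Your proposal is correct and follows the paper's proof. In both, the mean shift at a conditional MIC reduces to $\beta_{AC}\delta$ because the edge is the only surviving directed path, and the forward direction exhibits a conditioning set $\bvec{B}$ that turns an arbitrary edge into a conditional MIC. The only differences are cosmetic: you take $\bvec{B} = \bvec{V} \setminus \{V_i, V_j\}$, while the paper picks one internal vertex from each alternative directed path from $A$ to $C$, and your Step~1 spells out the path-sum justification that the paper simply asserts.
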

This proof is deferred to Appendix~\ref{apx: deferred proofs}. This lemma reveals that $\lambda$-strong mean II faithfulness is parametrically equivalent to preventing any single causal link from being trivially weak. In the example from Eq.~\eqref{eq: model for faithfulness violation}, this means $|\alpha_{12}| \ge \lambda$, $|\alpha_{23}| \ge \lambda$, and $|\alpha_{13}| \ge \lambda$.

\paragraph{A local characterization.} Lemma~\ref{lem: all_coeffs_bounded} is inherently \emph{local}: mean II faithfulness constrains only individual edge coefficients $\beta_{ij}$, and never sums or products of coefficients along competing paths (as in CI faithfulness). Equivalently, it suffices to require that the single coefficient $\beta_{AC}$ be bounded away from zero for each node $A$ and each conditional MIC $C$ of $A$. Every edge $(A, C) \in \bvec{E}$ is realized as such a pair, by conditioning on one internal vertex of every other $A \rightsquigarrow C$ path. Hence, this local condition coincides with the edge-wise bound $|\beta_{ij}| \ge \lambda$ for all $(i,j) \in \bvec{E}$.
\begin{lemma}\label{lem: ii_subset_ci}
    In a linear SEM with unit-variance noise, the set of parameterizations that violate $\lambda$-strong mean II faithfulness is a \emph{strict subset} of those that violate $\lambda$-strong CI faithfulness.
\end{lemma}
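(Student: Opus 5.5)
The plan is to use Lemma~\ref{lem: all_coeffs_bounded} to turn II violations into a concrete edge condition, and then prove two things. First, the inclusion: any parameterization that violates $\lambda$-strong mean II faithfulness also violates $\lambda$-strong CI faithfulness. Second, strictness: some parameterization violates CI faithfulness but not II faithfulness.

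\textbf{Inclusion.} By Lemma~\ref{lem: all_coeffs_bounded}, a violation of $\lambda$-strong mean II faithfulness means some edge $(i,j)\in\bvec{E}$ has $|\beta_{ij}|<\lambda$. I need to show this forces some d-connected statement to have a small correlation. The cleanest choice is the partial correlation of $V_i$ and $V_j$ given $\bvec{S}=\PA(V_j)\setminus\{V_i\}$. This is a d-connected statement, since $V_i\to V_j$ is a direct edge.

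To compute it, write $V_j=\beta_{ij}V_i+\sum_{k\in \bvec{S}}\beta_{kj}V_k+\varepsilon_j$, with $\varepsilon_j$ independent of $V_i$ and $\bvec{S}$. Then:
\[
\Cov(V_i,V_j\mid \bvec{S})=\beta_{ij}\Var(V_i\mid \bvec{S}).
\]
The partial correlation is therefore
\[
\frac{\beta_{ij}\sqrt{\Var(V_i\mid\bvec{S})}}{\sqrt{\beta_{ij}^2\Var(V_i\mid\bvec{S})+1}}.
\]
Its absolute value is at most $|\beta_{ij}|\sqrt{\Var(V_i\mid\bvec{S})}$.

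\textbf{Main obstacle.} The difficulty is the normalization: $\Var(V_i\mid\bvec{S})$ may exceed $1$, so $|\beta_{ij}|<\lambda$ does not immediately give a partial correlation below $\lambda$. I would handle this in one of two ways.
\begin{itemize}
\item Adopt the convention used in \citet{uhler2013geometry}, where $\lambda$-strong faithfulness is stated for partial correlations and thresholds are compared after rescaling, so the violation sets are compared under a common normalization.
\item Alternatively, choose $\bvec{S}$ to also include $\PA(V_i)$, provided this keeps the pair d-connected. It does, via the edge. With this choice $\Var(V_i\mid\bvec{S})\le 1$ by the unit noise variance, so $|\rho|\le|\beta_{ij}|<\lambda$.
\end{itemize}
Adding $\PA(V_i)$ to the conditioning set preserves the identity above, because $\varepsilon_j$ is still independent of everything conditioned on. Moreover, $\Var(V_i\mid \PA(V_i)\cup\ldots)\le\Var(\varepsilon_i)=1$. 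I would commit to this second choice, with $\bvec{S}=(\PA(V_j)\cup\PA(V_i))\setminus\{V_i\}$.

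\textbf{Strictness.} I would exhibit the model of Eq.~\eqref{eq: model for faithfulness violation} with $\alpha_{12}=\alpha_{23}=1$ and $\alpha_{13}=-1$. Every coefficient has magnitude $1\ge\lambda$ for $\lambda\le 1$, so by Lemma~\ref{lem: all_coeffs_bounded} the model is $\lambda$-strong mean II faithful. On the other hand, $\Cov(X_1,X_3)=\alpha_{13}+\alpha_{23}\alpha_{12}=0$, even though $X_1$ and $X_3$ are d-connected marginally. This is an exact CI faithfulness violation, hence a $\lambda$-strong one for every $\lambda>0$. The example also extends to an open neighborhood of parameters with small $|\Cov|$, which matters if one wants strictness in positive measure.
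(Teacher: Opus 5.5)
Your proof is correct and follows the same outline as the paper's: Lemma~\ref{lem: all_coeffs_bounded} reduces a mean-II violation to an edge with $|\beta_{ij}|<\lambda$; a partial correlation between the adjacent (hence d-connected) pair $V_i,V_j$ supplies the CI violation; and the cancellation instance of Eq.~\eqref{eq: model for faithfulness violation} gives strictness. The one real difference is your conditioning set, and it works in your favor.

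The paper conditions only on $\PA_{\G}(V_j)\setminus\{V_i\}$ and writes $\rho=\beta_{ij}\sqrt{\Var(V_i\mid\bvec{S})/\Var(V_j\mid\bvec{S})}$. It then argues only that this factor is bounded, so that a \emph{sufficiently small} $|\beta_{ij}|$ gives a violation. That does not by itself give inclusion at the same threshold $\lambda$. The factor equals $\sqrt{v/(\beta_{ij}^2v+1)}$ with $v=\Var(V_i\mid\bvec{S})$, and it exceeds $1$ whenever $v(1-\beta_{ij}^2)>1$. For example, take $V_0\to V_i\to V_j$ with coefficient $1$ on $V_0\to V_i$. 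The paper's conditioning set is then empty and $v=2$, so $\beta_{ij}=0.49<\lambda=0.5$ still yields $|\rho|\approx 0.57>\lambda$ for that set.

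You also condition on $\PA(V_i)$. This keeps $\varepsilon_j$ independent of the conditioning set, since those vertices are ancestors of $V_j$. It also forces $v\le\Var(\varepsilon_i)=1$, which gives the exact bound $|\rho|\le|\beta_{ij}|<\lambda$. That is precisely what the lemma as stated requires.

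Both your proof and the paper's rely on the partial-correlation form of $\lambda$-strong faithfulness from \citet{uhler2013geometry}, not the marginal-only display in Section~\ref{sec: prelims}. This is unavoidable: under the marginal-only version the inclusion fails. For instance, $V_i\to V_k\to V_j$ with unit coefficients plus $\beta_{ij}=0.4$ has all three marginal correlations above $0.5$.
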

The intuition behind this proof comes from Lemma~\ref{lem: all_coeffs_bounded}, since trivial coefficients must also result in some trivial correlations (relative to the unit variance). The full proof is deferred to Appendix~\ref{apx: deferred proofs}.

Finally, we can characterize the volume of II faithfulness violations.
\begin{theorem}\label{thm: geometry of intervention faithfulness}
The volume of $\lambda$-strong mean II faithfulness violations within the parameter space where all edge coefficients $\beta_{ij}$, $(i,j)\in\bvec{E}$, lie in $[-1, 1]$, and for $\lambda \in (0,1)$, is $\mathcal{O}(\lambda |\bvec{E}|)$.
\end{theorem}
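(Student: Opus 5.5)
By Lemma~\ref{lem: all_coeffs_bounded}, $\lambda$-strong mean II faithfulness holds exactly when every edge coefficient satisfies $|\beta_{ij}| \ge \lambda$. I would use this to turn the geometric question into a simple measure computation on the cube $[-1,1]^{|\bvec{E}|}$. Identify each parameterization with the vector $(\beta_{ij})_{(i,j)\in\bvec{E}}$ of edge coefficients; the noise variances are fixed to one, so they contribute no free parameters. The violation set is then
\[
\mathcal{V}_\lambda \;=\; \bigcup_{(i,j)\in\bvec{E}} S_{ij}, \qquad S_{ij} \coloneqq \{\beta \in [-1,1]^{|\bvec{E}|} : |\beta_{ij}| < \lambda\}.
\]

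Each $S_{ij}$ is a slab. It has width $2\lambda$ in the $\beta_{ij}$ coordinate and full width $2$ in each of the remaining $|\bvec{E}|-1$ coordinates. Its volume is therefore $2\lambda \cdot 2^{|\bvec{E}|-1} = \lambda\, 2^{|\bvec{E}|}$. A union bound over the $|\bvec{E}|$ slabs gives
\[
\mathrm{vol}(\mathcal{V}_\lambda) \;\le\; |\bvec{E}|\,\lambda\,2^{|\bvec{E}|}.
\]
Dividing by the total volume $2^{|\bvec{E}|}$, the violation set occupies a fraction at most $\lambda|\bvec{E}|$ of parameter space, which is $\mathcal{O}(\lambda|\bvec{E}|)$.

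For a sanity check, the exact value follows from independence of the coordinates under the uniform measure: the fraction is $1-(1-\lambda)^{|\bvec{E}|}$. Bernoulli's inequality confirms this is at most $\lambda|\bvec{E}|$. I would also contrast the result with Theorem~\ref{thm: geometry of CI faithfulness}: here the bound is linear in $\lambda$ because the constraints are edge-wise, with no products or sums of coefficients along competing paths.

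The main obstacle is interpretive rather than computational. I need to commit to reading ``volume'' as normalized volume (equivalently, measure under the uniform distribution on the cube), since the raw volume carries the factor $2^{|\bvec{E}|}$. I also need to make sure Lemma~\ref{lem: all_coeffs_bounded} is applied as a genuine ``iff'' over the whole cube, including edges whose endpoints have other connecting paths. That case is covered by the local-characterization remark: every edge becomes a conditional-MIC pair after conditioning on one internal vertex of each competing path. The boundary case $|\beta_{ij}|=\lambda$ has measure zero, so the choice of strict versus non-strict inequality does not matter.
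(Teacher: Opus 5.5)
Your proposal is correct and follows essentially the same route as the paper: Lemma~\ref{lem: all_coeffs_bounded} turns the violation set into the union of the slabs $\{|\beta_{ij}|<\lambda\}$, each of relative volume $\lambda$, and a union bound over the $|\bvec{E}|$ edges gives $\mathcal{O}(\lambda|\bvec{E}|)$. Your exact value $1-(1-\lambda)^{|\bvec{E}|}$ and your explicit choice of normalized volume are useful refinements; the paper leaves the normalization implicit by speaking of ``relative volume.''
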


This linear growth in $\lambda$, when compared to Theorem~\ref{thm: geometry of CI faithfulness}, illustrates that II faithfulness is a much milder assumption than CI faithfulness.

\section{Algorithm}\label{sec: alg}
In Section~\ref{sec: ii_faithfulness}, we laid the foundation for intervention-only causal discovery using change sets and introduced two key assumptions: II faithfulness and intervention locality. We will now present an algorithm that uses this non-parametric framework to learn causal structures, identifying the conditions for complete recovery and establishing equivalence classes otherwise. For this entire section, $\G$ will denote the true graph with $n$ vertices, and we will assume the model satisfies II faithfulness and intervention locality.

The identifiability of causal discovery from interventions depends on the number of variables that can be simultaneously intervened upon. We refer to this as the cardinality of the intervention, denoted $|\bvec{A}|$ for an intervention $I(\bvec{A})$. We will first present an algorithm for the unrestricted case and then provide versions for limited intervention cardinality.

\subsection{Unrestricted Cardinality}
When the number of simultaneous interventions is not limited, the Unrestricted Intervention Cardinality (UIC) algorithm (Algorithm~\ref{alg: uic}) recovers the true graph $\G = (\bvec{V}, \bvec{E})$. Step 1 performs every single-node intervention and takes the transitive closure of the observed change sets to obtain a supergraph $\G'$ of $\G$; step 2 then walks $\G'$ in topological order and removes each edge $V_i \to V_j$ whose endpoint is not a conditional MIC of $V_i$ given the remaining parents.

\begin{algorithm}[t]
\caption{Unrestricted Intervention Cardinality (UIC)}
\label{alg: uic}
\begin{algorithmic}[1]
\Require change-set oracle $\CHG(\cdot)$ obtained from hard interventions; vertex set $\bvec{V}$.
\Ensure estimated causal DAG $\G''$.
\State $\G_{temp} \gets (\bvec{V}, \emptyset)$
\For{each $V_i \in \bvec{V}$} \Comment{Step 1: recover the transitive closure}
  \State perform $I(\{V_i\})$
  \For{each $V_j \in \CHG(I(\{V_i\})) \setminus \{V_i\}$}
    \State add edge $V_i \to V_j$ to $\G_{temp}$
  \EndFor
\EndFor
\State $\G' \gets \text{TC}(\G_{temp})$; \quad $\G'' \gets \G'$
\For{each $V_j$ in a topological order of $\G'$} \Comment{Step 2: prune spurious edges}
  \For{each $V_i \in \PA_{\G'}(V_j)$}
    \State $\bvec{B} \gets \PA_{\G''}(V_j) \setminus \{V_i\}$
    \If{$V_j \notin \CHG(I(\{V_i\}) \mid I(\bvec{B}))$}
      \State remove edge $V_i \to V_j$ from $\G''$
    \EndIf
  \EndFor
\EndFor
\State \Return $\G''$
\end{algorithmic}
\end{algorithm}

To prove correctness, we first characterize the graph $\G'$ learned in step 1.

\begin{lemma} \label{lem: step 1}
    The graph $\G'$ from step 1 is the transitive closure of the true graph $\G$.
\end{lemma}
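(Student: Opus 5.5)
We prove the two inclusions $\text{TC}(\G_{temp}) \subseteq \text{TC}(\G)$ and $\text{TC}(\G) \subseteq \text{TC}(\G_{temp})$. Since both graphs live on $\bvec{V}$, this shows $\G' = \text{TC}(\G)$.

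\emph{Upper inclusion.} By intervention locality, for every $V_i$ we have $\CHG(I(\{V_i\})) \setminus \{V_i\} \subseteq \DE_{\G}(V_i)$. Hence every edge $V_i \to V_j$ added to $\G_{temp}$ is an edge of $\text{TC}(\G)$, so $\G_{temp} \subseteq \text{TC}(\G)$. Taking transitive closures preserves inclusion, and $\text{TC}(\text{TC}(\G)) = \text{TC}(\G)$. Therefore $\G' = \text{TC}(\G_{temp}) \subseteq \text{TC}(\G)$.

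\emph{Lower inclusion.} We show that every edge of $\G$ is either an edge of $\G_{temp}$ or is implied by a directed path in $\G_{temp}$. The key tool is II faithfulness with $\bvec{A} = \{V_i\}$ and $\bvec{B} = \emptyset$. Here the residual graph is $\G$ itself and $\CHG(I(\{V_i\}) \mid I(\emptyset)) = \CHG(I(\{V_i\}))$. So every MIC of $V_i$ in $\G$ lies in the change set, and $\G_{temp}$ contains every edge $V_i \to C$ with $C$ a MIC of $V_i$.

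It then suffices to show that $\text{TC}$ of the set of MIC-edges of $\G$ contains every edge of $\G$, hence every edge of $\text{TC}(\G)$. We argue by induction on the length of the longest directed path from $V_i$ to $V_j$ in $\G$, for pairs with $V_j \in \DE_{\G}(V_i)$.

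\begin{itemize}
\item If the longest path has length $1$, the edge $V_i \to V_j$ is the unique directed path from $V_i$ to $V_j$. So $V_j$ is a MIC of $V_i$, and the edge lies in $\G_{temp}$.
\item Otherwise, take any edge $V_i \to V_j$ of $\G$ that is not a MIC edge. There is another directed path $V_i \to W \rightsquigarrow V_j$ with $W \neq V_j$. The pairs $(V_i, W)$ and $(W, V_j)$ both have strictly shorter longest paths, because a longer path to $W$ or from $W$ would extend to a longer path from $V_i$ to $V_j$.
\item By the induction hypothesis, $W$ is reachable from $V_i$ in $\G_{temp}$, and $V_j$ is reachable from $W$ in $\G_{temp}$. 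Concatenating the two gives $V_i \rightsquigarrow V_j$ in $\G_{temp}$.
\end{itemize}

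More simply, the same induction shows directly that every descendant pair is connected by a path in $\G_{temp}$. Hence $\text{TC}(\G) \subseteq \text{TC}(\G_{temp}) = \G'$.

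The main obstacle is this lower inclusion. Single-node interventions can miss descendants because of path cancellation, so $\G_{temp}$ itself need not equal $\text{TC}(\G)$. What makes the argument work is choosing the induction measure as the longest-path length rather than the shortest. This is the same mechanism behind Lemma~\ref{lem: always immediate child}: the measure guarantees that any non-MIC edge can be decomposed through an intermediate vertex into strictly smaller instances.
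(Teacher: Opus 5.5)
Your proof is correct. The upper inclusion matches the paper's, but you organize the lower inclusion differently.

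The paper inducts over source vertices in reverse topological order, showing that each outgoing $\mathcal{G}$-edge of $V_m$ is realized by a path in $\mathcal{G}_{temp}$. For a non-MIC child $V_j$, it runs an inner descent through a chain of children $V_j, U_1, U_2, \ldots$ of $V_m$, each topologically earlier than the last, until it reaches a MIC $M$. It then applies the outer hypothesis to the descendants of $V_m$ to get a path from $M$ to $V_j$.

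You replace this nested outer induction plus descent with a single induction on longest-path length over all descendant pairs. Splitting at the second vertex of any path of length at least two gives two strictly smaller instances. This buys you three things:
\begin{itemize}
\item It is shorter and avoids the paper's somewhat informal ``the sequence of children must terminate'' step.
\item It makes explicit the instantiation $\bvec{B}=\emptyset$ of II faithfulness, which the paper uses only implicitly.
\item It isolates the purely graph-theoretic fact both proofs rely on: the MIC edges, which are exactly the edges of the transitive reduction of $\mathcal{G}$, already generate $\text{TC}(\mathcal{G})$.
\end{itemize}
The paper's version stays closer to the topological-order reasoning of Lemma~\ref{lem: always immediate child} and the later $k$-RIC analysis.

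One small presentational fix. In your second bullet, let $(V_i,V_j)$ be an arbitrary descendant pair whose longest path has length at least two, not only a non-MIC edge. You apply the hypothesis to $(W,V_j)$, which need not be an edge. The same split works unchanged, as your closing remark already indicates.
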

The two inclusions are shown separately: $\text{TC}(\G_{temp}) \subseteq \text{TC}(\G)$ follows from intervention locality (every edge added to $\G_{temp}$ marks a true descendant relation), and $\text{TC}(\G) \subseteq \text{TC}(\G_{temp})$ by induction in reverse topological order, using II faithfulness to realize each true edge $V_m \to V_j$ as a path through a MIC of $V_m$. The full proof is given in Appendix~\ref{apx: deferred proofs}.

We now show that step 2 correctly prunes all spurious edges from the transitive closure without removing any true edges.

\begin{lemma}\label{lem: step 2}
    An edge $V_i \to V_j$ is in the true graph $\bvec{E}$ if and only if $V_j \in \CHG(I(\{V_i\}) \mid I(\PA_{\G''}(V_j) \setminus \{V_i\}))$.
\end{lemma}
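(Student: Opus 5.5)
The plan is to prove both directions simultaneously with an invariant on $\G''$, maintained by induction over the sequence of tests performed in step 2. The invariant is
\[
\PA_{\G}(V_j) \subseteq \PA_{\G''}(V_j) \subseteq \AN_{\G}(V_j)
\]
at every moment of step 2.

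Initially $\G'' = \G' = \text{TC}(\G)$ by Lemma~\ref{lem: step 1}, so the invariant holds. A test only ever removes an edge $V_i \to V_j$ when it fails. Proving the forward direction below for the current state shows that true edges never fail, so true edges are never removed and the invariant persists. This holds even though $\bvec{B} = \PA_{\G''}(V_j) \setminus \{V_i\}$ may still contain spurious parents that have not yet been tested. Thus the statement should be read for the current state of $\G''$ at the moment $V_i \to V_j$ is tested, and it suffices to prove the equivalence under the invariant alone.

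\emph{($\Rightarrow$)} Suppose $V_i \to V_j \in \bvec{E}$. I will show $V_j$ is a conditional MIC of $V_i$ given $I(\bvec{B})$; II faithfulness (Definition~\ref{def: strong int faithfulness}, with $\bvec{A}=\{V_i\}$) then places $V_j$ in $\CHG(I(\{V_i\}) \mid I(\bvec{B}))$.
\begin{itemize}
\item Neither $V_i$ nor $V_j$ lies in $\bvec{B}$, since $\bvec{B}$ consists of ancestors of $V_j$ other than $V_i$. Hence the edge survives in the residual graph $\G_{I(\bvec{B})}$.
\item Any other directed path $V_i \rightsquigarrow V_j$ in $\G$ has a last edge $P \to V_j$ with $P \in \PA_{\G}(V_j)\setminus\{V_i\}$. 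By the invariant $P \in \bvec{B}$, so this path is destroyed in $\G_{I(\bvec{B})}$.
\item The residual graph is a subgraph of $\G$, so it creates no new paths. The edge $V_i \to V_j$ is therefore the unique directed path, which makes $V_j$ a MIC of $V_i$ in $\G_{I(\bvec{B})}$.
\end{itemize}

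\emph{($\Leftarrow$)} Suppose $V_i \to V_j \notin \bvec{E}$. Then every true parent of $V_j$ lies in $\PA_{\G''}(V_j)\setminus\{V_i\} = \bvec{B}$. I need a conditional form of intervention locality: the SCM under $I(\bvec{B})$ is itself an SCM whose graph is $\G$ with all edges into $\bvec{B}$ deleted. In that mutilated graph every parent of $V_j$ is a source whose mechanism is exogenous, so $V_j$ is not a descendant of $V_i$. Concretely, $V_j = f_j(\PA_{\G}(V_j), N_j)$. Adding the hard intervention on $V_i \notin \bvec{B}$ leaves the (exogenous, independent) distribution of the intervened values of $\bvec{B}$ unchanged. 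It also leaves $N_j$ unchanged, and $N_j$ remains independent of the intervened values. Hence $P_{I(\{V_i\}\cup\bvec{B})}(v_j) = P_{I(\bvec{B})}(v_j)$, so $V_j \notin \CHG(I(\{V_i\}) \mid I(\bvec{B}))$.

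I expect the main obstacle to be this last step. The paper states intervention locality only for unconditional change sets, so the nonparametric argument must justify locality relative to a baseline intervention $I(\bvec{B})$. I would handle this by applying intervention locality to the post-intervention SCM, which again is generated by an SCM and so satisfies it, noting that $\bvec{B}$'s new mechanisms are exogenous. A secondary subtlety is making the inductive invariant airtight given that $\bvec{B}$ shrinks during the inner loop. This is handled by the order of argument above: the forward direction uses only the invariant, not which spurious edges have already been removed.
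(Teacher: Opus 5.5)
Your proof is correct and follows the paper's argument. The forward direction makes $V_j$ a conditional MIC of $V_i$ given $I(\bvec{B})$ and invokes II faithfulness; the reverse direction uses that every true parent of $V_j$ lies in $\bvec{B}$, so locality relative to $I(\bvec{B})$ keeps $V_j$ out of the conditional change set. Your explicit invariant $\PA_{\G}(V_j) \subseteq \PA_{\G''}(V_j) \subseteq \AN_{\G}(V_j)$ and your SCM-level derivation of conditional locality tighten two points the paper leaves implicit: it appeals to $\PA_{\G'}$ rather than the current $\PA_{\G''}$, and it applies intervention locality relative to a baseline intervention without comment.
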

This proof relies on the fact that a superset of the parent set of $V_j$ is easily obtained from the transitive closure that we recovered in the previous step, which can block all paths from $V_i$ to $V_j$. The full proof is given in Appendix~\ref{apx: deferred proofs}.

\begin{theorem}\label{thm: uic}
Assuming II faithfulness and intervention locality, the UIC algorithm learns $\G$ using $\mathcal{O}(n^2)$ interventions of cardinality up to $n-1$.
\end{theorem}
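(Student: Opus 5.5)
Theorem~\ref{thm: uic} follows almost directly from Lemmas~\ref{lem: step 1} and~\ref{lem: step 2}. What remains is to assemble correctness and to count interventions and their cardinalities.

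For correctness, Lemma~\ref{lem: step 1} gives $\G' = \text{TC}(\G)$, which contains every true edge. Step 2 only ever tests edges of $\G'$ and only removes edges, so every edge of $\G''$ at termination lies in $\G'$. Each edge $V_i \to V_j$ of $\G'$ is tested exactly once, when $V_j$ is processed. At that moment the conditioning set is $\PA_{\G''}(V_j)\setminus\{V_i\}$, the current (partially pruned) parent set. Lemma~\ref{lem: step 2} states that the test keeps the edge iff $V_i \to V_j \in \bvec{E}$.

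The point needing care is that Lemma~\ref{lem: step 2} is phrased with $\PA_{\G''}(V_j)$ evolving during the loop. I would add an invariant: throughout the processing of $V_j$, the current set $\PA_{\G''}(V_j)$ satisfies $\PA_{\G}(V_j) \subseteq \PA_{\G''}(V_j) \subseteq \PA_{\G'}(V_j) = \AN_{\G}(V_j)$. I would prove this by induction over the edges tested into $V_j$.
\begin{itemize}
\item It holds initially, since $\G' = \text{TC}(\G)$.
\item Each test removes only a non-edge, by Lemma~\ref{lem: step 2} applied under the invariant, so true parents are never removed.
\end{itemize}
Under this invariant, the conditioning set contains every true parent of $V_j$ other than $V_i$. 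Hence every directed $V_i \rightsquigarrow V_j$ path of length $\ge 2$ is blocked, and the argument of Lemma~\ref{lem: step 2} applies:
\begin{itemize}
\item If $V_i \to V_j$ is a true edge, $V_j$ is a conditional MIC of $V_i$ given $I(\bvec{B})$, so II faithfulness puts $V_j$ in the conditional change set.
\item If $V_i \to V_j$ is not a true edge, no directed path from $V_i$ survives in $\G_{I(\bvec{B})}$, so intervention locality (applied to the post-$I(\bvec{B})$ model, where $\bvec{B}$ is cut off) excludes $V_j$.
\end{itemize}
After all of $V_j$'s incoming edges are processed, $\PA_{\G''}(V_j) = \PA_{\G}(V_j)$. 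Edges into different vertices do not interact, so the topological order is only bookkeeping. Therefore $\G'' = \G$.

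For complexity, step 1 uses $n$ single-node interventions. Step 2 tests each edge of $\G'$ once, and $|\bvec{E}(\text{TC}(\G))| \le \binom{n}{2}$. Each test needs the interventions $I(\bvec{B})$ and $I(\{V_i\}\cup\bvec{B})$, so at most $2\binom{n}{2}$ interventions. The total is $\mathcal{O}(n^2)$. The largest cardinality is $|\{V_i\}\cup\bvec{B}| \le |\PA_{\G'}(V_j)| \le n-1$, since $V_j$ itself is never intervened on.

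The main obstacle is the invariant step: making sure Lemma~\ref{lem: step 2} is applied with the current pruned parent set rather than the final one. If that lemma's proof only needs a conditioning set sandwiched between $\PA_\G(V_j)\setminus\{V_i\}$ and $\AN_\G(V_j)\setminus\{V_i\}$, the induction goes through immediately.
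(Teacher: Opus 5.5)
Your proposal is correct and follows the paper's route: correctness comes from combining Lemma~\ref{lem: step 1} and Lemma~\ref{lem: step 2}, and the $\mathcal{O}(n^2)$ bound comes from one test per edge of $\text{TC}(\G)$, with every conditioning set contained in $\PA_{\G'}(V_j)$. Your explicit invariant that true parents are never pruned (so the evolving set $\PA_{\G''}(V_j)\setminus\{V_i\}$ always contains $\PA_{\G}(V_j)\setminus\{V_i\}$) makes precise a step the paper's proof of Lemma~\ref{lem: step 2} uses only implicitly, and counting two interventions per test rather than one does not change the bound.
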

\begin{proof}
    Step 1 performs $n$ interventions. Step 2 performs at most one intervention for each edge in the transitive closure $\G'$, which is at most $\mathcal{O}(n^2)$. The cardinality of interventions in step 2 can be up to $n-1$. Correctness follows from Lemmas \ref{lem: step 1} and \ref{lem: step 2}.
\end{proof}

\subsection{Restricted Cardinality}
The UIC algorithm may require interventions on $n-1$ nodes simultaneously, which is often unrealistic. We now consider the case where the cardinality of any intervention is limited, i.e., $|\bvec{A}| \le k$. The following $k$-Restricted Intervention Cardinality ($k$-RIC) algorithm shows that identifiability is still possible if $k$ is larger than the maximum local (vertex) connectivity $\kappa_{\max}$ between non-adjacent vertex pairs, i.e., the largest number of vertices needed to block all directed paths between any non-adjacent pair. Throughout, a set $\bvec{Z}$ ``separates'' an ordered pair $(V_i, V_j)$ if removing $\bvec{Z}$ (as a hard intervention does to its targets' incoming edges) leaves no directed path from $V_i$ to $V_j$. Step 1 is identical to UIC; step 2 (Algorithm~\ref{alg: kric}) differs only in that, before testing an edge $V_i \to V_j$, it finds a \emph{minimum vertex cut} $\bvec{Z}$ separating $V_i$ from $V_j$ in $\G''$ and conditions on $I(\bvec{Z})$, pruning the edge when $|\bvec{Z}| < k$ and $V_j \notin \CHG(I(\{V_i\}) \mid I(\bvec{Z}))$.

\begin{algorithm}[t]
\caption{$k$-Restricted Intervention Cardinality ($k$-RIC)}
\label{alg: kric}
\begin{algorithmic}[1]
\Require change-set oracle $\CHG(\cdot)$; vertex set $\bvec{V}$; cardinality budget $k$.
\Ensure estimated graph $\G''$ (equal to $\G$ when $k > \kappa_{\max}$).
\State run Step 1 of Algorithm~\ref{alg: uic} to obtain $\G'$; \quad $\G'' \gets \G'$
\For{each $V_j$ in a topological order of $\G'$}
  \For{each $V_i \in \PA_{\G'}(V_j)$}
    \State $\G^- \gets \G''$ with edge $V_i \to V_j$ removed
    \State $\bvec{Z} \gets$ minimum vertex cut in $\G^-$ separating $V_i$ from $V_j$
    \If{$|\bvec{Z}| < k$ \textbf{ and } $V_j \notin \CHG(I(\{V_i\}) \mid I(\bvec{Z}))$}
      \State remove edge $V_i \to V_j$ from $\G''$
    \EndIf
  \EndFor
\EndFor
\State \Return $\G''$
\end{algorithmic}
\end{algorithm}

\begin{theorem}\label{thm: mic}
    Assume II faithfulness and intervention locality, and let $\kappa_{\max}$ denote the maximum, over all non-adjacent ordered pairs $(V_i, V_j)$, of the local connectivity of the pair (the minimum number of vertices whose removal destroys all directed paths from $V_i$ to $V_j$ in $\G$). Then the $k$-RIC algorithm with $k > \kappa_{\max}$ learns $\G$ using $\mathcal{O}(n^2)$ interventions.
\end{theorem}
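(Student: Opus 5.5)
Step 1 of the $k$-RIC algorithm coincides with that of UIC, so Lemma~\ref{lem: step 1} gives $\G' = \text{TC}(\G)$ using $n$ single-node interventions. The work is in step 2. I would prove, by induction along the processing order (each $V_j$ in topological order of $\G'$, each parent $V_i$ in turn), the invariant
\[
\bvec{E} \subseteq \bvec{E}(\G'') \subseteq \bvec{E}(\G').
\]
Once an edge is tested, it is removed if and only if it is not in $\bvec{E}$. The counting is then immediate. There are at most $\binom{n}{2}$ tested edges, each costing one conditioning intervention $I(\bvec{Z})$ and one $I(\{V_i\}\cup\bvec{Z})$, so $\mathcal{O}(n^2)$ interventions in total, each of cardinality at most $k$.

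\textbf{True edges are never removed.} Suppose $V_i \to V_j \in \bvec{E}$ and let $\bvec{Z}$ be any set not containing $V_i, V_j$ with $|\bvec{Z}|<k$. In the residual graph $\G_{I(\bvec{Z})}$ the edge $V_i\to V_j$ survives. Since $\bvec{Z}$ separates $V_i$ from $V_j$ in $\G^-$, and $\G^- \supseteq \G$ minus that edge (by the invariant), every other directed path from $V_i$ to $V_j$ in $\G$ is blocked. So $V_j$ is a conditional MIC of $V_i$ given $I(\bvec{Z})$. II faithfulness then puts $V_j \in \CHG(I(\{V_i\})\mid I(\bvec{Z}))$, and the edge is kept. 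If instead $|\bvec{Z}|\ge k$, the edge is kept trivially. The same argument as Lemma~\ref{lem: step 2} applies with $\bvec{Z}$ in place of the parent set.

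\textbf{Spurious edges are removed.} Suppose $V_i\to V_j\in \bvec{E}(\G')\setminus\bvec{E}$, so the pair is non-adjacent in $\G$. This is the main obstacle: the algorithm computes the minimum cut in the current $\G''$, not in $\G$. Those cuts can be larger, because spurious transitive edges into $V_j$, or out of $V_i$, may not yet be pruned and could create extra paths. I would handle this through the processing order. When $V_j$ is handled, every $V_\ell$ earlier in topological order has already had its incoming edges finalized, so $\G''$ restricted to those vertices equals $\G$ there. The remaining unpruned edges are exactly the spurious edges into $V_j$ itself, and any $V_i \rightsquigarrow V_j$ path in $\G^-$ enters $V_j$ through some current in-neighbour $V_\ell\neq V_i$.

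I would first try to choose the order over $V_i \in \PA_{\G'}(V_j)$ in reverse topological order, closest parents first. The claim to establish is that the minimum $V_i$–$V_j$ cut in $\G^-$ then equals the connectivity of $(V_i,V_j)$ in $\G$, which is at most $\kappa_{\max}<k$. The intended justification is that a path using a not-yet-pruned spurious edge $V_\ell\to V_j$ can be rerouted through $V_\ell$'s true path to $V_j$ in $\G$. Since $V_\ell$ is already in the current graph, this rerouting does not raise vertex connectivity: a cut of size $\kappa$ in $\G$ also separates in $\G^-$, as any path through a spurious edge $V_\ell \to V_j$ yields a $\G$-walk through $V_\ell$ hitting the cut, except when the cut contains $V_\ell$'s descendants only. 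That exceptional case is where care is needed, and I would patch it by Menger-style rerouting, arguing that the cut can be chosen among $V_j$'s side.

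Given $|\bvec{Z}|<k$ and $\bvec{Z}$ separating $V_i$ from $V_j$ in $\G^-\supseteq\G$, $V_j$ is not a descendant of $V_i$ in $\G_{I(\bvec{Z})}$. Intervention locality applied to the intervention on $\{V_i\}\cup\bvec{Z}$ versus $\bvec{Z}$ shows $V_j\notin\CHG(I(\{V_i\})\mid I(\bvec{Z}))$, so the edge is pruned. Completing the induction gives $\G''=\G$.
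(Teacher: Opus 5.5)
You have found a real weak point, and the paper's own proof glosses over it. The paper takes a minimum cut $\bvec{Z}_{\text{true}}$ of $\G$ and argues that it still separates $V_i$ from $V_j$ in $\G^-$. Its reason is that the first spurious edge $(U,W)$ on an avoiding path would have both endpoints in $\{V_1,\dots,V_{j-1}\}$. That overlooks exactly the not-yet-pruned spurious edges $V_\ell\to V_j$ you point to. Because $k$-RIC leaves the order of the inner loop unspecified, this matters. Take $\G$ with edges $A\to L$, $A\to C$, $L\to C$, $C\to J$, so $\kappa_{\max}=1$, and set $k=2$. If $A$ is handled first at $J$, then $\G^-$ still contains $L\to J$. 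The paths $A\to L\to J$ and $A\to C\to J$ force the minimum cut $\{L,C\}$ of size $2$, and the spurious edge $A\to J$ is never pruned. So fixing the order to closest parents first is the right move.

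The gap is that the step carrying the theorem is not proved: you never show that the minimum cut in $\G^-$ has size $<k$. The route you sketch for it cannot work. Rerouting a path through an unpruned $V_\ell\to V_j$ along $V_\ell$'s true path fails in exactly your ``exceptional case'', where the cut lies strictly below $V_\ell$. An argument that does not use the order cannot repair this. In the example above with $A$ handled first, $\{C\}$ is such a cut, and the minimum cut in $\G^-$ really is $2>\kappa_{\max}$.

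What you are missing is that under your order this case never arises. Run a nested induction: outer over $V_j$, inner over $\PA_{\G'}(V_j)$ in reverse topological order. The invariant is that already-processed spurious edges into $V_j$ are gone and true edges remain. When $V_i$ is handled, $\G''$ agrees with $\G$ on $\{V_1,\dots,V_{j-1}\}$. So every internal vertex of a $V_i\rightsquigarrow V_j$ path in $\G^-$ lies in $\DE_{\G}(V_i)$ and comes after $V_i$. In particular, the last internal vertex is a parent that has already been processed, so the final edge is a true edge. Hence the $V_i\rightsquigarrow V_j$ paths of $\G^-$ are exactly those of $\G$, $\bvec{Z}_{\text{true}}$ separates, and the minimum cut has size at most $\kappa_{\max}<k$. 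The spurious in-neighbours of $V_j$ still present all precede $V_i$ and are unreachable from it.

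With this argument in place of the rerouting sketch, the rest of your proof goes through: Case~1 via $\bvec{E}\subseteq\bvec{E}(\G'')$, pruning via locality in $\G_{I(\bvec{Z})}$, and the $\mathcal{O}(n^2)$ count.
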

The proof hinges on the topological order traversal, which guarantees that $\G''$ has the correct structure on the ancestors of vertex $V_i$ when $V_j$ is considered. Hence, we can find a vertex cut separating paths to $V_j$ that matches the minimum vertex cut in the true graph $\G$.

The full proof (by case analysis on true edges and non-edges, with the non-edge case an induction in topological order) is given in Appendix~\ref{apx: deferred proofs}.

\subsection{Partial Identification}
When the intervention cardinality $k$ is less than or equal to the local connectivity required to separate a non-adjacent pair, we lose identifiability. For example, consider two graphs: (1) $A \to B \to C$ and (2) $A \to B \to C, A \to C$. With only single-node interventions ($k=1$), we observe that $B, C \in \CHG(I(\{A\}))$, and $C \in \CHG(I(\{B\}))$. This information produces the same transitive closure for both graphs, and we cannot distinguish them without a cardinality 2 intervention ($I(\{B\})$ compared to $I(\{A, B\})$) or a CI test ($A \indep C \given B$).

\begin{definition}
    The $k$-robust transitive closure of a graph $\G$ is a supergraph of $\G$ containing an edge $V_i \to V_j$ (for $i, j$ in topological order) if the minimum vertex cut separating $V_i$ and $V_j$ in $\G$ has cardinality $k$ or greater.
\end{definition}

\begin{theorem}\label{thm: k-robust transitive closure}
    Assuming II faithfulness, the $k$-RIC algorithm recovers $\G$ up to its $k$-robust transitive closure.
\end{theorem}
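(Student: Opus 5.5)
We read the statement as follows: the output $\G''$ of $k$-RIC contains every edge of $\G$, and each extra edge $V_i \to V_j$ of $\G''$ is one whose minimum vertex cut in $\G$ has size at least $k$. In other words, $\G \subseteq \G'' \subseteq$ the $k$-robust transitive closure of $\G$. We will prove this through an invariant maintained by the topological traversal of Step~2. By Lemma~\ref{lem: step 1}, $\G' = \text{TC}(\G)$, so Step~2 starts from a supergraph of $\G$ whose topological orders are exactly those of $\G$. The argument then has two halves, mirroring the case analysis for Theorem~\ref{thm: mic}: true edges are never removed, and a non-edge with small cut always is removed.

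\textbf{True edges survive.} Let $V_i \to V_j \in \bvec{E}$. Whenever the test runs, the algorithm conditions on $I(\bvec{Z})$, where $\bvec{Z}$ separates $V_i$ from $V_j$ in $\G^-$, i.e.\ in $\G''$ with the edge $V_i\to V_j$ deleted. We maintain the inductive invariant $\G \subseteq \G''$. Under this invariant, every directed $V_i \rightsquigarrow V_j$ path in $\G$ other than the edge itself also lies in $\G^-$, so it is cut by $\bvec{Z}$. Moreover $V_i, V_j \notin \bvec{Z}$. Hence $V_j$ is a MIC of $V_i$ in the residual graph $\G_{I(\bvec{Z})}$, that is, a conditional MIC. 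II faithfulness then gives $V_j \in \CHG(I(\{V_i\}) \mid I(\bvec{Z}))$, so the edge is kept and the invariant persists.

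\textbf{Non-edges with small cut are removed.} Let $V_i \to V_j \in \G'\setminus \G$ with $\kappa_\G(V_i,V_j) < k$. We proceed by induction over the outer loop on $V_j$ in topological order. The hypothesis is that, when $V_j$ is processed, every vertex strictly before $V_j$ already has its in-edges in $\G''$ equal to those of the $k$-robust closure restricted to the vertices processed so far. The key step is to show that the minimum cut $\bvec{Z}$ found in $\G^-$ satisfies two properties.

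\begin{enumerate}
\item[(a)] $|\bvec{Z}| < k$.
\item[(b)] $\bvec{Z}$ separates $V_i$ from $V_j$ in $\G$ itself, so by intervention locality applied in the residual graph, $V_j \notin \CHG(I(\{V_i\}) \mid I(\bvec{Z}))$.
\end{enumerate}

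Property (b) is immediate, since $\G \subseteq \G^-$ (the edge removed is a non-edge of $\G$). With (a) and (b), the algorithm prunes the edge.

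For (a), the plan is to show that the extra edges of $\G^-$ among the vertices relevant to $V_i\rightsquigarrow V_j$ paths do not raise the min cut above $\kappa_\G(V_i,V_j)$. Any extra edge $U\to W$ in $\G''$ with $W$ before $V_j$ satisfies $\kappa_\G(U,W)\ge k$, so it is "simulated" by at least $k$ internally disjoint $\G$-paths. We take a minimum $\G$-cut $\bvec{S}$ of size $<k$ and claim it also cuts every $\G^-$ path. The extra edges into $V_j$ itself are handled by the inner loop order (edges already decided). Suppose a $\G^-$ path avoids $\bvec{S}$ and uses an extra edge $U\to W$. Then some $\G$-path from $U$ to $W$ avoids $\bvec{S}$, because fewer than $k$ vertices cannot block all $k$ disjoint paths. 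Substituting these $\G$-paths yields a $\G$-walk from $V_i$ to $V_j$ avoiding $\bvec{S}$, which is a contradiction.

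\textbf{Converse and main obstacle.} Conversely, we show that no edge with $\kappa_\G \ge k$ can be pruned, because any cut in $\G^-\supseteq \G$ is a cut in $\G$ and so has size $\ge k$, failing the $|\bvec{Z}|<k$ test. This yields exactly the $k$-robust closure.

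The main obstacle is the substitution argument in (a). It needs care in two places. First, the extra edges whose endpoint $W$ is $V_j$ itself (other parents of $V_j$ not yet decided) must be treated: here we use that an undecided edge $U \to V_j$ either has $\kappa_\G(U,V_j)\ge k$ or will be removed. For this we fix a processing order of the in-edges of $V_j$ by reverse topological order of $V_i$, so that decisions closer to $V_j$ are made first. Second, the substituted walk must avoid $V_i$'s direct edge. If this ordering fails to close the argument, we would instead prove the set-level statement that the final $\G''$ has the same $\kappa$ values below $k$ as $\G$, and argue by a fixed-point comparison.
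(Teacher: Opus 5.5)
Your proof is correct and has the same skeleton as the paper's. True edges survive because $\bvec{Z}$ leaves $V_i \to V_j$ as the only remaining path, so $V_j$ is a conditional MIC. A spurious edge is pruned once a cut of size $<k$ is found, and locality certifies the test because $\G \subseteq \G^-$. Spurious edges with $\kappa_\G \ge k$ survive because every cut in $\G^-$ is a cut in $\G$.

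The difference is that you actually justify that the minimum cut computed in the current $\G''$ is below $k$ whenever $\kappa_\G(V_i,V_j) < k$. The paper's two-sentence proof only asserts this. It implicitly leans on the induction for Theorem~\ref{thm: mic}, which claims two things: the subgraph of $\G''$ on earlier vertices equals $\G$, and the first spurious edge on an offending path has both endpoints before $V_j$. Neither holds here. When $k \le \kappa_{\max}$, earlier vertices keep their $k$-robust spurious edges, and the offending edge may be a not-yet-tested in-edge $U \to V_j$.

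Your two additions repair these points:
\begin{itemize}
\item Your substitution lemma repairs the first. A retained edge $U \to W$ with $\kappa_\G(U,W) \ge k$ can be rerouted through $\G$ around any set of fewer than $k$ vertices that excludes $U$ and $W$.
\item Your reverse-topological processing of the inner loop repairs the second.
\end{itemize}

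The ordering is necessary, not cosmetic, because Algorithm~\ref{alg: kric} leaves the inner order unspecified. Take $\G$ with edges $V_i \to U$, $V_i \to A$, $U \to A$, $A \to V_j$, and $k = 2$. Suppose $V_i \to V_j$ is tested while the spurious $U \to V_j$ is still in $\G''$. Then the minimum cut in $\G^-$ is $\{U, A\}$, so the edge is never pruned, even though $\kappa_\G(V_i,V_j) = 1$. This also contradicts Theorem~\ref{thm: mic}, since here $\kappa_{\max} = 1$.

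With your ordering the argument closes, so the fixed-point fallback is unnecessary, but state the key step precisely:
\begin{itemize}
\item Any in-edge $U \to V_j$ lying on a directed path from $V_i$ has $U$ after $V_i$, so it has already been decided. An inner induction on processing order then shows every surviving spurious one has $\kappa_\G(U,V_j) \ge k$.
\item Undecided in-edges have tails before $V_i$ and cannot lie on such a path.
\end{itemize}
Your phrasing ``either has $\kappa_\G \ge k$ or will be removed'' does not do this job, since an edge still present at test time affects the cut.

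Your worry about the direct edge is moot. In this case $V_i \to V_j \notin \G$, and a directed walk in a DAG is automatically a path.
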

\begin{proof}
    The algorithm prunes a spurious edge $V_i \to V_j$ if and only if it can find a separating vertex cut $\bvec{Z}$ of size less than $k$. If the minimum separating set for a non-edge has size $\ge k$, the spurious edge remains.
\end{proof}

\section{EMPIRICAL VALIDATION} \label{sec: empirical}

\paragraph{Setup.}
We instantiate intervention-based causal discovery with a two-sample Student-$t$ test (\verb|scipy.stats|, significance level $0.05$, Bonferroni-corrected for the number of tests) to detect a change in mean, and run the $k$-RIC algorithm for cardinality budgets $k \in \{1, 2, 3, 10\}$.\footnote{Code to reproduce all experiments is available on \href{https://github.com/honeybijan/Intervention-Only-Causal-Discovery}{GitHub}.} On $10$-node graphs, $k=10$ is unrestricted and coincides with UIC. We compare against two conditional-independence-reliant baselines. The first is the PC algorithm~\citep{spirtes2000causation} (implementation of \citealp{zhang2021gcastle}) run on the observational data initially, with the skeleton edges then oriented by the interventions in the usual way (``PC$\to$Interv''). The second is IGSP~\citep{wang2017permutation}, a joint interventional method, given the \emph{same} single-node interventional datasets that $k$-RIC uses in its first step; unlike $k$-RIC, its permutation search still relies on observational CI tests.

We evaluate on 30 random DAGs on 10 vertices with 25 expected edges, with additive standard Gaussian noise; the edge-weight range is regime-specific and stated below. To emulate a gene knockout, an intervention severs a node from its parents and resets it to a draw from a Gaussian centered at $\delta$ with standard deviation $0.5$---a hard (perfect) intervention producing a mean shift of $\delta$. Because the exogenous noise has unit variance, $\delta$ is expressed in units of the noise standard deviation (e.g., $\delta = 5$ is a five-standard-deviation shift). Keeping a non-negligible intervention variance leaves the intervened variable well-conditioned for the Gaussian CI and invariance tests that IGSP relies on, since a near-deterministic intervention would destabilize the partial correlations it estimates. We draw $1000$ observational samples and $1000$ samples per intervention. All methods receive the same observational sample and the same single-node interventional datasets; $k$-RIC's pruning is additionally allowed to query targeted interventions on the separating sets it searches over (adaptive, and used only for $k \geq 2$), which is intrinsic to the method, whereas PC$\to$Interv and IGSP are non-adaptive baselines that consume only the shared pool. We report the structural Hamming distance (SHD) between the true and recovered structures.

\paragraph{CI refinement.}
$k$-RIC returns the $k$-robust transitive closure, a supergraph of $\G$. We utilize a refinement step that removes the remaining spurious edges with observational CI tests: fixing a topological order of the recovered graph, we test each surviving edge $V_i \to V_j$ by conditioning on all other predecessors of $V_j$ in that order, deleting the edge when $V_i \indep V_j$ given that set (a partial-correlation test at level $\alpha$). Conditioning on the full predecessor set blocks every directed path into $V_j$, so a true edge's partial correlation stays proportional to its direct coefficient and is not eliminated by marginal cancellation; an edge is dropped only when that coefficient is too weak for the observational test to detect---the low-signal regime where interventions hold no advantage either. We apply this step in the coefficient-strength sweep below and the appendix analysis; the violin comparisons report pure $k$-RIC without this refinement, to isolate the cardinality effect.

\begin{figure*}[!tb]
    \centering
    \includegraphics[width=0.85\linewidth]{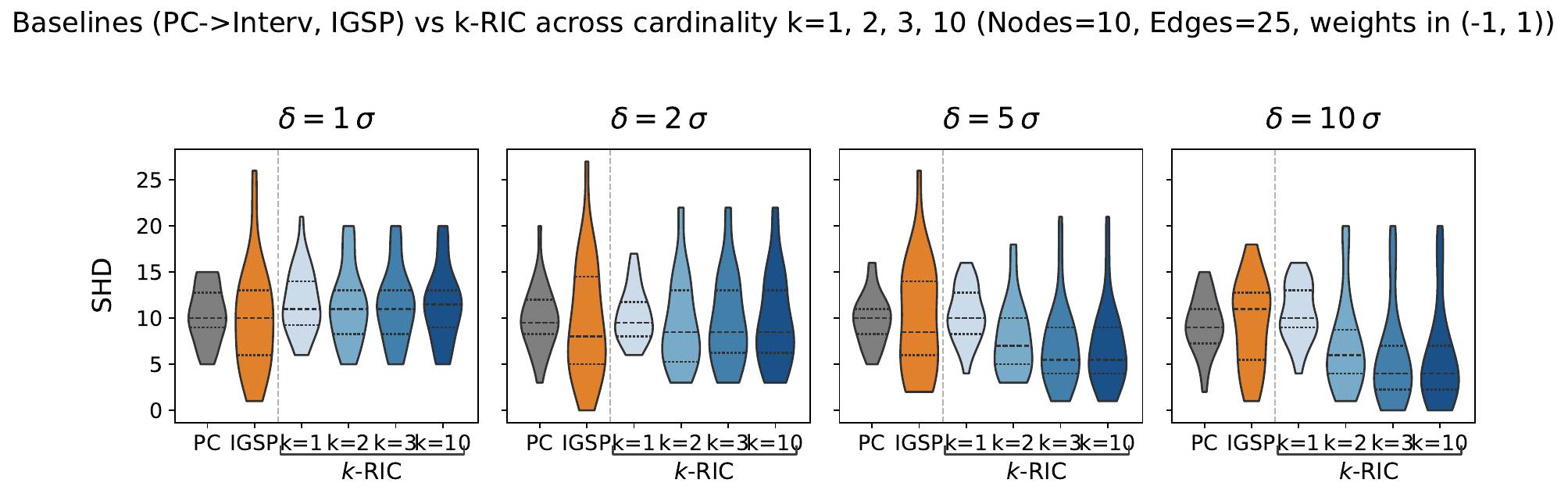}
    \caption{Intervention-only $k$-RIC across the cardinality budget $k \in \{1,2,3,10\}$ versus two CI-reliant baselines, at several intervention strengths $\delta$ (weights in $[-1,1]$; $30$ DAGs, $10$ nodes, $25$ edges; SHD, lower is better). \textbf{PC$\to$Interv} (grey) and \textbf{IGSP} (orange) are independent of $k$; the remaining violins are $k$-RIC at each $k$ (on $10$ nodes, $k=10$ is unrestricted). As $\delta$ grows, multi-node interventions ($k>1$) prune the spurious ancestral edges that single-node interventions ($k=1$) leave behind, and $k$-RIC drops below both baselines.}
    \label{fig:three algs}
\end{figure*}

\paragraph{Cardinality (standard regime, weights $[-1,1]$).}
We first isolate the effect of the intervention cardinality $k$, the quantity our theory turns on. At $k=1$, $k$-RIC performs no multi-node pruning and carries the spurious ancestral edges of the transitive closure. As the budget grows, multi-node interventions prune these edges and SHD falls---but the size of this effect is governed by $\delta$. When $\delta = 1\sigma$ the induced shifts are too small for the pruning tests to fire reliably, and all methods sit at comparable SHD; as $\delta$ increases, $k$-RIC improves monotonically with $k$ and drops clearly below both baselines (by $k=2$ for $\delta \geq 5\sigma$, and by $k=3$ at $\delta = 2\sigma$), reaching the lowest error of any method at $k=10$ (SHD $\approx 4$ at $\delta = 10\sigma$; Figure~\ref{fig:three algs}). The two baselines are independent of $k$, and both are overtaken once the interventions carry an appreciable signal. That an intervention-only method, using no CI tests, matches or beats methods that additionally exploit CI is direct evidence of the informational primacy of interventions; the advantage becomes overwhelming under path cancellation, where the CI baselines collapse (SHD $\approx 15$--$19$) while $k$-RIC recovers most of the structure (Appendix~\ref{apx: cancellation}, Figure~\ref{fig:lots of path cancellations}).

\begin{figure*}[!tb]
    \centering
    \includegraphics[width=0.8\linewidth]{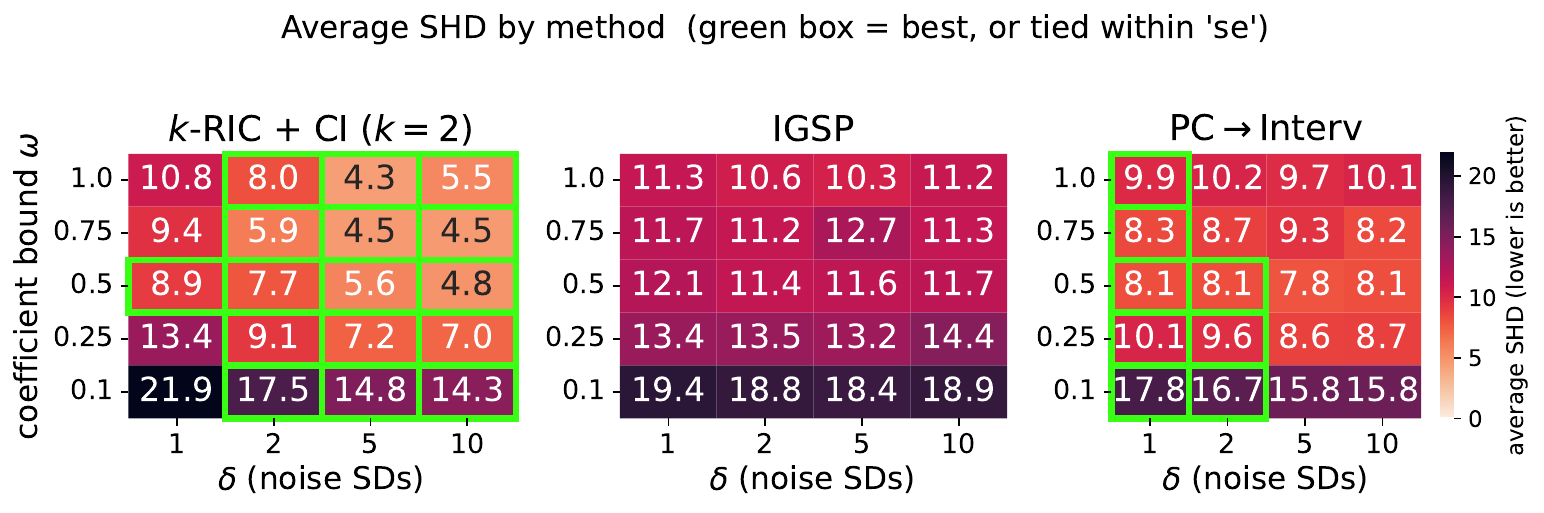}
    \caption{Average SHD (lower is better) of three methods across the coefficient bound $\omega$ and intervention strength $\delta$ (in noise SDs), on $10$-node graphs with $25$ edges ($20$ DAGs per cell). \textbf{Left:} the intervention-first pipeline ($k$-RIC at $k=2$, then a CI-refinement step). \textbf{Middle:} IGSP. \textbf{Right:} PC$\to$Interv. A green box marks each cell where a method is best, or within one standard error of the best. The intervention-first pipeline is best across most of the plane, ceding only the weakest-signal corner---the smallest $\delta$ or $\omega$---to PC$\to$Interv.}
    \label{fig:heatmap}
\end{figure*}

\paragraph{Coefficient-strength sweep (weights $[-\omega, \omega]$).}
Figure~\ref{fig:heatmap} compares the three methods across the coefficient range $[-\omega, \omega]$ and the intervention strength $\delta$, reporting average SHD over 20 DAGs per cell. The intervention-first pipeline---$k$-RIC at $k=2$ followed by a CI-refinement step---is best across the great majority of the plane, and its margin widens with the signal: for $\omega \geq 0.5$ and $\delta \geq 5$ it drives SHD down to roughly $4$--$6$, about half of either baseline. The conditional-independence baselines (PC$\to$Interv and IGSP) trail the pipeline across the bulk of the plane and catch up only in the weakest-signal corner---the smallest $\delta$ and smallest $\omega$---where the induced mean shift ($\propto \omega\delta$) is too small for the interventions to resolve structure and observational CI is all that remains. The two baselines degrade differently as $\omega$ weakens---PC$\to$Interv stays nearly flat (it orients from the interventions) while IGSP falls behind (its search leans on observational CI throughout)---as detailed in Appendix~\ref{apx: omega lineplot}.

\section{Discussion}

Differing geometries of faithfulness assumptions drive different robustness to finite-sample noise. We show that causal structure remains recoverable under a much milder notion of intervention faithfulness, positioning interventions as primary rather than supporting carriers of causal information. Empirically, intervention-only $k$-RIC outperforms the CI-based baselines PC and IGSP, and a secondary CI-refinement step yields a complete pipeline that returns a fully resolved DAG. Since a permutation is the weakest ($0$-robust) intervention equivalence class, generalizing permutation-based methods to $k$-robust transitive closures is a natural next step toward the intervention-first workflow.

\paragraph{Limitations.} Our framework does not address cancellation due to latent confounders, so extending II faithfulness to ancestral graphs (ADMGs/MAGs) is important future work. Our approach also presumes that hard interventions are available. This is natural in domains such as cell biology (gene knockouts), but interventions are infeasible or ethically impermissible in many settings (e.g., much of environmental and social science), where CI-based methods remain the appropriate tool. Finally, the algorithms as implemented test for mean shifts and therefore certify only mean II faithfulness; a nonparametric two-sample test recovers the distributional notion. Soft interventions would require strengthening II faithfulness, since they do not sever a node from its parents and thus do not break canceling paths.

\begin{acknowledgements} % will be removed in pdf for initial submission,
						 % (without ‘accepted’ option in \documentclass)
                         % so you can already fill it to test with the
                         % ‘accepted’ class option
    We thank the anonymous reviewers for their helpful feedbacks.
    B.M.~is partially supported by the Advanced Research Concepts (ARC) COMPASS program, sponsored by the Defense Advanced Research Projects Agency (DARPA) under agreement number HR001-25-3-0212. J.Z.~was partially supported  by the Eric and Wendy Schmidt Center. C.U.~was partially supported by NCCIH/NIH (1DP2AT012345), ONR (N00014-24-1-2687), DOE-ASCR (DE-SC0023187), and the Eric and Wendy Schmidt Center at the Broad Institute.
\end{acknowledgements}

% References
\bibliography{biblio}

\newpage

\onecolumn

\title{Relaxing Faithfulness with Intervention-Only Causal Discovery \\(Supplementary Material)}
\maketitle

\appendix
\setcounter{figure}{0}
\renewcommand{\thefigure}{A\arabic{figure}}

\section{Deferred Proofs} \label{apx: deferred proofs}

\subsection{Proof of Lemma~\ref{lem: always immediate child}}
\label{app:lem_always_immediate_child}

For any vertex $V \in \bvec{V}$, if $\CH(V) \neq \emptyset$, then there is always at least one MIC.

We assume $\G$ is a Directed Acyclic Graph (DAG). As $\G$ is a DAG, there exists a topological ordering of its vertices. Let's denote the position of a vertex $X$ in such an ordering by $\text{ord}(X)$. A key property of a topological ordering is that if there is a directed path from $X$ to $Y$, then $\text{ord}(X) < \text{ord}(Y)$.

Let $V$ be a vertex such that its set of children, $\CH(V)$, is non-empty. Consider the set of topological ordering positions for all children of $V$:
$$ S = \{ \text{ord}(C) \mid C \in \CH(V) \} $$
Since $\CH(V)$ is a finite, non-empty set, $S$ is a finite, non-empty set of integers. Therefore, $S$ must contain a minimum element.

Let $B \in \CH(V)$ be a child of $V$ that achieves this minimum, i.e., $\text{ord}(B) = \min(S)$. We claim that $B$ is a most immediate child of $V$.

Suppose for the sake of contradiction that $B$ is \emph{not} a most immediate child. By definition, this means there must exist another directed path from $V$ to $B$ besides the direct edge $V \to B$. Any such path must be of length at least 2 and must pass through another child of $V$. Let this path be:
$$ V \to C \to \dots \to B $$
where $C \in \CH(V)$ and $C \neq B$.

The existence of a directed path from $C$ to $B$ implies that $\text{ord}(C) < \text{ord}(B)$. However, $C$ is a child of $V$, so $\text{ord}(C) \in S$. We have found an element in $S$ that is strictly smaller than $\min(S)$, which is a contradiction.

Therefore, our assumption that $B$ is not a most immediate child must be false. It follows that $B$ is a most immediate child, and such a child must always exist if $\CH(V)$ is non-empty.

\subsection{Proof of Lemma~\ref{lem: all_coeffs_bounded}}
\label{app:lem_all_coeffs_bounded}

In a linear SEM with unit-variance noise, a model is $\lambda$-strong mean II faithful if and only if every path coefficient in the model is bounded away from zero, i.e., $|\beta_{ij}| \ge \lambda$ for all $(i, j) \in \bvec{E}$.

\textbf{($\Rightarrow$)} Assume the model is $\lambda$-strong mean II faithful. Consider an arbitrary edge $(A, C) \in \bvec{E}$ with coefficient $\beta_{AC}$. Let $\mathcal{P}_{AC}$ be the set of all other directed paths from $A$ to $C$. For each path $p \in \mathcal{P}_{AC}$, select one intermediate vertex $V_p$ along that path. Let the set of all such selected vertices be $\bvec{B} = \{V_p \mid p \in \mathcal{P}_{AC}\}$.

Now consider the intervention $I(\bvec{B})$. In the residual graph $\G_{I(\bvec{B})}$, all alternative directed paths from $A$ to $C$ are removed, leaving only the direct edge. Therefore, $C$ is a conditional MIC of $A$ given $I(\bvec{B})$. By the definition of $\lambda$-strong mean II faithfulness, an intervention on $A$ must produce a detectable change on $C$. In a linear model, this change is governed by the total effect in the residual graph, which is simply $\beta_{AC}$. The faithfulness condition thus requires $|\beta_{AC}| \ge \lambda$. Since $(A, C)$ was an arbitrary edge, this holds for all edges in $\bvec{E}$.

\textbf{($\Leftarrow$)} Assume $|\beta_{ij}| \ge \lambda$ for all edges $(i, j) \in \bvec{E}$. Consider any disjoint intervention sets $I(\bvec{A})$ and $I(\bvec{B})$, and let $C$ be a conditional MIC of some $A \in \bvec{A}$ given $I(\bvec{B})$. By definition, this means that in the residual graph $\G_{I(\bvec{B})}$, the only directed path from $A$ to $C$ is the direct edge $(A, C)$.

In a linear model, the change in the mean of $C$ from an intervention on $A$ (that shifts its mean by $\delta$) is determined by the total effect of $A$ on $C$ in the active graph, multiplied by $\delta$. Here, the total effect is simply the path coefficient of the single connecting path, $\beta_{AC}$. The change in mean is thus $\beta_{AC}\delta$. The condition for $\lambda$-strong mean II faithfulness requires $|\beta_{AC}\delta| \ge \lambda|\delta|$, which simplifies to $|\beta_{AC}| \ge \lambda$. By our initial assumption, this condition is met. Since this holds for any valid choice of $A, C, \bvec{B}$, the model is $\lambda$-strong mean II faithful.

\subsection{Proof of Lemma~\ref{lem: ii_subset_ci}}
\label{app:lem_ii_subset_ci}

In a linear SEM with unit-variance noise, the set of parameterizations that violate $\lambda$-strong mean II faithfulness is a strict subset of those that violate $\lambda$-strong CI faithfulness.

By Lemma~\ref{lem: all_coeffs_bounded}, a violation of $\lambda$-strong mean II faithfulness is equivalent to $|\beta_{ij}| < \lambda$ for some edge $(i,j) \in \bvec{E}$. Consider the adjacent pair $(V_i, V_j)$ and condition on $\bvec{S} = \PA_{\G}(V_j) \setminus \{V_i\}$. In a linear Gaussian SEM, the population regression coefficient of $V_i$ in the regression of $V_j$ on $\{V_i\} \cup \bvec{S}$ equals the structural coefficient $\beta_{ij}$ (since $V_j = \sum_{P \in \PA_\G(V_j)} \beta_{Pj} P + \varepsilon_j$ with $\varepsilon_j$ independent of the parents), and the partial correlation between $V_i$ and $V_j$ given $\bvec{S}$ is
\[
\rho(V_i, V_j \mid \bvec{S}) = \beta_{ij}\,\sqrt{\frac{\Var(V_i \mid \bvec{S})}{\Var(V_j \mid \bvec{S})}}.
\]
Because all conditional variances are finite and strictly positive in a non-degenerate model, the factor multiplying $\beta_{ij}$ is bounded and nonzero; hence $|\rho(V_i, V_j \mid \bvec{S})|$ is proportional to $|\beta_{ij}|$ and is likewise small whenever $|\beta_{ij}|$ is small. Since $V_i$ and $V_j$ are adjacent (hence d-connected given $\bvec{S}$), a sufficiently small $|\beta_{ij}|$ drives this partial correlation below the corresponding $\lambda$-strong CI faithfulness threshold, i.e., a $\lambda$-strong CI faithfulness violation. Thus every $\lambda$-strong mean II faithfulness violation is also a $\lambda$-strong CI faithfulness violation. Crucially, the converse fails: a model can violate CI faithfulness through cancellation (e.g., $\alpha_{13} + \alpha_{12}\alpha_{23} = 0$ in Eq.~\eqref{eq: model for faithfulness violation}) while every edge coefficient remains large, hence satisfying mean II faithfulness. Therefore the set of mean-II-unfaithful models is \emph{strictly} contained in the set of CI-unfaithful models.

\subsection{Proof of Theorem~\ref{thm: geometry of intervention faithfulness}}

\begin{proof}
Let the parameter space be the hypercube $[-1, 1]^{\abs{\bvec{E}}}$. By Lemma \ref{lem: all_coeffs_bounded}, a $\lambda$-strong mean II faithfulness violation occurs if and only if $|\beta_{ij}| < \lambda$ for at least one edge $(i, j) \in \bvec{E}$. For a single coefficient, the volume of violating parameters is the region where it lies in $(-\lambda, \lambda)$, which has a relative volume of $\lambda$. By applying a union bound over all $|\bvec{E}|$ edges, the relative volume of the violation set is at most $\mathcal{O}(\lambda |\bvec{E}|)$.
\end{proof}

\subsection{Proof of Lemma~\ref{lem: step 1}}

The graph $\G'$ from step 1 is the transitive closure of the true graph $\G$.

\begin{proof}
    Let $\G_{temp}$ be the graph constructed from the single-node interventions before taking the transitive closure. We need to prove that $\text{TC}(\G_{temp}) = \text{TC}(\G)$.

    First, we show $\text{TC}(\G_{temp}) \subseteq \text{TC}(\G)$. An edge $V_i \to V_j$ is added to $\G_{temp}$ only if $V_j \in \CHG(I(\{V_i\}))$. By intervention locality, this implies $V_j$ is a descendant of $V_i$ in $\G$. Therefore, every edge in $\G_{temp}$ corresponds to a path in $\G$, which means $\G_{temp} \subseteq \text{TC}(\G)$. The transitive closure of a subgraph is a subgraph of the original's transitive closure, so $\text{TC}(\G_{temp}) \subseteq \text{TC}(\G)$.

    Second, we show $\text{TC}(\G) \subseteq \text{TC}(\G_{temp})$. This is equivalent to showing that $\G \subseteq \text{TC}(\G_{temp})$. We proceed by induction over the source vertices in reverse topological order, showing that for the vertex currently under consideration, each of its outgoing edges $V_i \to V_j$ in $\G$ is realized as a directed path from $V_i$ to $V_j$ in $\G_{temp}$.

    Let the vertices be ordered $V_1, \ldots, V_n$ according to a reverse topological sort of $\G$.

    \textbf{Base case:} For the first vertex $V_1$ (a sink in $\G$), the claim is vacuously true as it has no outgoing edges.

    \textbf{Inductive step:} Assume for all $\ell < m$, the claim holds for $V_\ell$. Now consider $V_m$. Let $V_j$ be any child of $V_m$ in $\G$.
    If $V_j$ is a MIC of $V_m$, then by II faithfulness, $V_j \in \CHG(I(\{V_m\}))$, so the edge $V_m \to V_j$ is in $\G_{temp}$.
    If $V_j$ is not a MIC of $V_m$, then there exists another path from $V_m$ to $V_j$ in $\G$, which must pass through another child of $V_m$. Let this path be $V_m \to U_1 \to \dots \to V_j$.
    Since $U_1$ appears after $V_m$ in the reverse topological sort, our inductive hypothesis applies to it and all subsequent nodes on the path to $V_j$. This means there is a path $U_1 \leadsto V_j$ in $\G_{temp}$.
    This reduces our problem to showing there is a path $V_m \leadsto U_1$ in $\G_{temp}$. This is the same problem we started with for $V_j$. However, we can construct a sequence of children $V_j, U_1, U_2, \dots$ in which each child is required for a non-MIC path from $V_m$ to the preceding child. Since the number of children is finite, this sequence must terminate, and it must terminate at a child that is a MIC of $V_m$. Let this MIC be $M$. By II faithfulness, the edge $V_m \to M$ is in $\G_{temp}$. The path from $M$ to $V_j$ exists in $\G$, and all nodes on it are covered by the inductive hypothesis, so there is a path $M \leadsto V_j$ in $\G_{temp}$. Concatenating these, we get a path $V_m \to M \leadsto V_j$ in $\G_{temp}$.
    Thus, for any edge $V_m \to V_j$ in $\G$, there is a path in $\G_{temp}$. This means $\G \subseteq \text{TC}(\G_{temp})$, which proves $\text{TC}(\G) \subseteq \text{TC}(\G_{temp})$.
\end{proof}

\subsection{Proof of Lemma~\ref{lem: step 2}}
\label{app:lem_step_2}

An edge $V_i \to V_j$ is in the true graph $\bvec{E}$ if and only if $V_j \in \CHG(I(\{V_i\}) \mid I(\PA_{\G''}(V_j) \setminus \{V_i\}))$.

\textbf{($\Rightarrow$)} Assume $V_i \to V_j$ is in $\bvec{E}$. Let $\bvec{B} = \PA_{\G''}(V_j) \setminus \{V_i\}$. When we intervene on $\bvec{B}$, all paths into $V_j$ from its parents (other than $V_i$) are blocked. Because $\G'$ is the transitive closure, $\PA_{\G'}(V_j)$ contains all ancestors of $V_j$, so intervening on $\bvec{B}$ blocks all paths from $V_i$ to $V_j$ that do not begin with the edge $V_i \to V_j$. Thus, $V_j$ becomes a conditional MIC of $V_i$. By II faithfulness, $V_j \in \CHG(I(\{V_i\}) \mid I(\bvec{B}))$, so the edge is correctly kept.

\textbf{($\Leftarrow$)} Assume $V_i \to V_j$ is not in $\bvec{E}$. Let $\bvec{B} = \PA_{\G''}(V_j) \setminus \{V_i\}$. Since $V_i$ is not a true parent of $V_j$, the set of true parents $\PA_{\G}(V_j)$ is a subset of $\bvec{B}$. Any directed path from $V_i$ to $V_j$ in $\G$ must pass through at least one node in $\PA_{\G}(V_j)$. By intervening on $\bvec{B}$, all such paths are blocked. In the residual graph, $V_j$ is not a descendant of $V_i$. By intervention locality, $V_j \notin \CHG(I(\{V_i\}) \mid I(\bvec{B}))$, so the edge is correctly removed.

\subsection{Proof of Theorem~\ref{thm: mic}}

\begin{proof}
    The algorithm's correctness hinges on Step 2 correctly deciding whether to keep or remove each candidate edge $V_i \to V_j$ from the transitive closure graph $\G'$. We analyze this for true edges and non-edges separately.

    \textbf{Case 1: $V_i \to V_j$ is a true edge in $\G$.}
    The algorithm considers the edge $V_i \to V_j$ and hypothetically removes it to form $\G^-$. It then searches for a minimum vertex cut $\bvec{Z}$ in $\G^-$ that separates $V_i$ from $V_j$. Two outcomes are possible:
    \begin{enumerate}[nosep]
        \item No separating set $\bvec{Z}$ with $|\bvec{Z}| < k$ is found. The algorithm correctly keeps the edge.
        \item A separating set $\bvec{Z}$ with $|\bvec{Z}| < k$ is found. The algorithm then tests if $V_j \in \CHG(I(\{V_i\}) \mid I(\bvec{Z}))$. By definition, the set $\bvec{Z}$ blocks all paths from $V_i$ to $V_j$ in $\G$ *except for the direct edge* $V_i \to V_j$. In the context of the conditioning intervention $I(\bvec{Z})$, the only remaining path from $V_i$ to $V_j$ is the true edge itself. This makes $V_j$ a conditional MIC of $V_i$ given $I(\bvec{Z})$. By the II faithfulness assumption, $V_j$ must be in the change set. The condition $V_j \in \CHG(\dots)$ is met, and the algorithm correctly keeps the edge.
    \end{enumerate}
    In either outcome, a true edge is never removed.

    \textbf{Case 2: $V_i \to V_j$ is not a true edge in $\G$.}
    We need to show that the algorithm will find a separating set $\bvec{Z}$ with $|\bvec{Z}| < k$ and subsequently remove the edge. We prove this by induction on the position of $V_j$ in the topological order used by the algorithm. Without loss of generality, relabel the vertices $V_1, \ldots, V_n$ so that this topological order is $V_1, \ldots, V_n$.

    \textbf{Inductive Hypothesis (IH):} Assume that for every vertex $V_m$ with $m < j$, all incoming edges to $V_m$ in $\G''$ have been correctly identified; that is, the subgraph of $\G''$ induced by $\{V_1, \ldots, V_{j-1}\}$ is identical to the subgraph of $\G$ induced by the same set of vertices.

    \textbf{Inductive Step:} Consider the vertex $V_j$ and a candidate edge $V_i \to V_j$ that is in $\G'$ but not in $\G$ (so $i < j$).
    Since $V_i \to V_j$ is not a true edge, $(V_i, V_j)$ is a non-adjacent pair, and there exists a minimum vertex cut $\bvec{Z}_{\text{true}}$ in $\G$ that destroys all directed paths from $V_i$ to $V_j$. By Menger's theorem, the size of this cut equals the \emph{local} connectivity of the pair $(V_i, V_j)$---the maximum number of internally vertex-disjoint directed $V_i \rightsquigarrow V_j$ paths. This local quantity is at most $\kappa_{\max}$ by definition, and by the theorem's premise $k > \kappa_{\max}$, so $|\bvec{Z}_{\text{true}}| \le \kappa_{\max} < k$.

    Now, we must show that $\bvec{Z}_{\text{true}}$ also separates $V_i$ and $V_j$ in the current state of $\G''$ (with the edge $V_i \to V_j$ hypothetically removed). Assume for contradiction that it does not. This means there must be a path $P''$ from $V_i$ to $V_j$ in $\G''$ that avoids $\bvec{Z}_{\text{true}}$. Since $\bvec{Z}_{\text{true}}$ separates them in $\G$, this path $P''$ must contain at least one spurious edge (an edge in $\G''$ but not in $\G$). Let $(U, W)$ be the first such edge on path $P''$.

    The vertices $U$ and $W$ must both precede $V_j$ in the topological ordering. Therefore, $U, W \in \{V_1, \ldots, V_{j-1}\}$. By our Inductive Hypothesis, the subgraph of $\G''$ induced by these vertices is already identical to that of $\G$. This means the spurious edge $(U, W)$ cannot exist, a contradiction.

    Therefore, $\bvec{Z}_{\text{true}}$ is a valid vertex cut for $(V_i, V_j)$ in the current graph $\G''$. The algorithm searches for a *minimum* vertex cut, $\bvec{Z}_{\text{alg}}$, whose size must be less than or equal to the size of our valid cut: $|\bvec{Z}_{\text{alg}}| \le |\bvec{Z}_{\text{true}}| < k$.

    The condition $|\bvec{Z}_{\text{alg}}| < k$ is met. The algorithm tests if $V_j \in \CHG(I(\{V_i\}) \mid I(\bvec{Z}_{\text{alg}}))$. Since $\bvec{Z}_{\text{alg}}$ is a vertex cut separating $V_i$ and $V_j$ in the true graph $\G$, by intervention locality, $V_j$ will not be in the change set. The edge $V_i \to V_j$ is correctly removed.

    By induction, this holds for all vertices and all spurious edges are removed.
\end{proof}

\section{Path-Cancellation Comparison} \label{apx: cancellation}

Figure~\ref{fig:lots of path cancellations} repeats the cardinality comparison of Figure~\ref{fig:three algs} in the path-cancellation regime (edge weights in $[-0.1, 0.1]$), the stress test for CI-based methods.

\begin{figure}[h]
    \centering
    \includegraphics[width=0.98\linewidth]{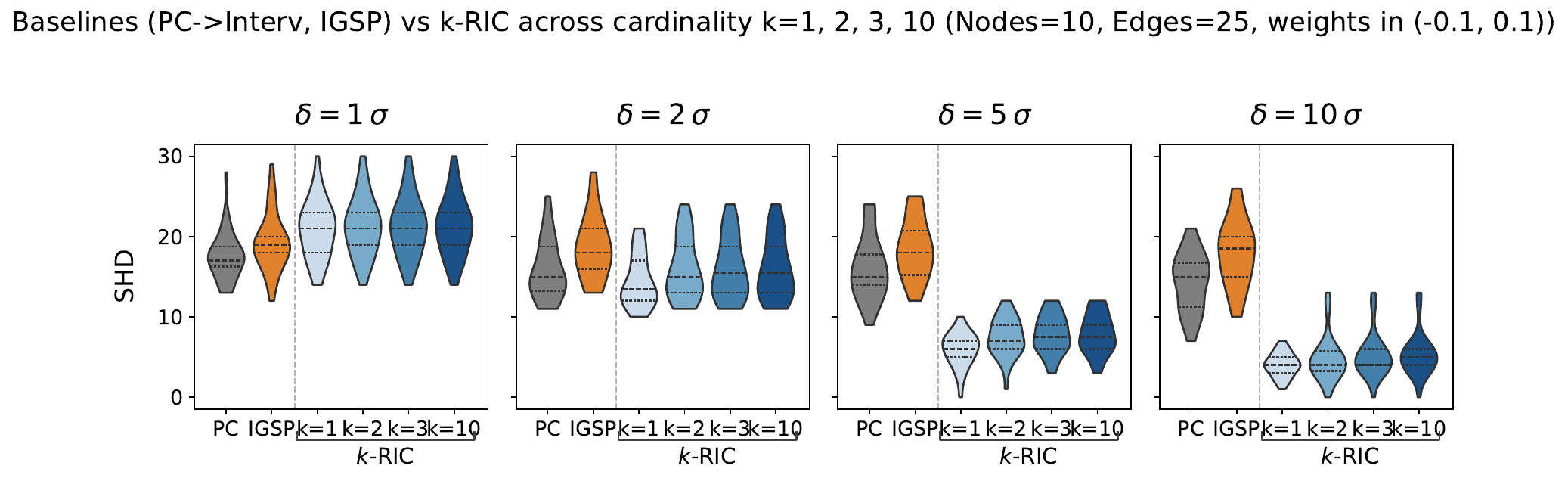}
    \caption{The cardinality comparison under significant risk of path cancellation (weights in $[-0.1, 0.1]$; $30$ DAGs, $10$ nodes, $25$ edges). The small, sign-varied coefficients make partial correlations cancel, leaving PC$\to$Interv and IGSP near-uninformative at every $\delta$, whereas intervention-only $k$-RIC, which uses no independence tests, recovers most of the structure once the intervention strength is sufficient ($\delta \gtrsim 5\sigma$).}
    \label{fig:lots of path cancellations}
\end{figure}

Small, sign-varied coefficients make partial correlations cancel, so the conditional-independence baselines PC$\to$Interv and IGSP stay near-uninformative (SHD $\approx 15$--$19$) at every $\delta$. $k$-RIC never tests independencies and is immune to this cancellation; it is limited only by whether the (now small) intervention shifts are detectable. Once $\delta \gtrsim 2\sigma$ makes them visible, even the crudest intervention-only estimate already beats both baselines, and by $\delta = 5\sigma$ $k$-RIC recovers most of the structure (SHD $\approx 6$, roughly a third of the baselines' error), reaching SHD $\approx 4$--$5$ at $\delta = 10\sigma$. Here the cardinality budget $k$ barely matters---the tiny coefficients leave the multi-node pruning tests underpowered, so extra cardinality buys little---which is exactly why we isolate the cardinality effect in the more benign standard regime of Figure~\ref{fig:three algs}, where the signal is strong enough for pruning to bite.

The $\delta$-dependence of the $k$-RIC violins in Figure~\ref{fig:lots of path cancellations} reflects the power of the underlying two-sample tests. When $\delta$ is small relative to the noise, the tests are underpowered: they miss true MIC shifts (causing missed edges) and fail to certify redundant edges as prunable (causing under-pruning), which both raise and widen the SHD distribution and account for the residual error even at large $k$. As $\delta$ grows, both failure modes recede and the distributions concentrate at low SHD. This is precisely the $\lambda$-strong mean II-faithfulness regime of Section~\ref{sec: parametric ii faith}: the detectable mean shift at a MIC scales with the product of the intervention strength $\delta$ and the edge coefficient, so shrinking $\delta$ shrinks the effective $\lambda$ that is satisfied (Figure~\ref{fig:heatmap}).

\section{Coefficient-Strength Dependence of the Baselines} \label{apx: omega lineplot}

Figure~\ref{fig:heatmap} reports mean SHD across the $(\omega, \delta)$ plane; Figure~\ref{fig:omega lineplot} isolates the coefficient-strength dependence by fixing $\delta = 5\sigma$ and plotting SHD against $\omega$ for the three methods. PC$\to$Interv is nearly flat in $\omega$: it recovers the skeleton from observational conditional independence but orients every edge directly from the interventional shifts, which stay strong at $\delta = 5\sigma$ regardless of $\omega$. IGSP, whose permutation search relies on observational CI tests throughout, degrades steadily as $\omega$ shrinks and those tests lose power. The intervention-first pipeline is lowest across the range. This is the same mechanism, now at the level of whole algorithms, that motivates preferring interventions to conditional independence: the more a method's decisions rest on CI tests, the more it suffers as the parametric signal weakens.

\begin{figure}[h]
    \centering
    \includegraphics[width=0.68\linewidth]{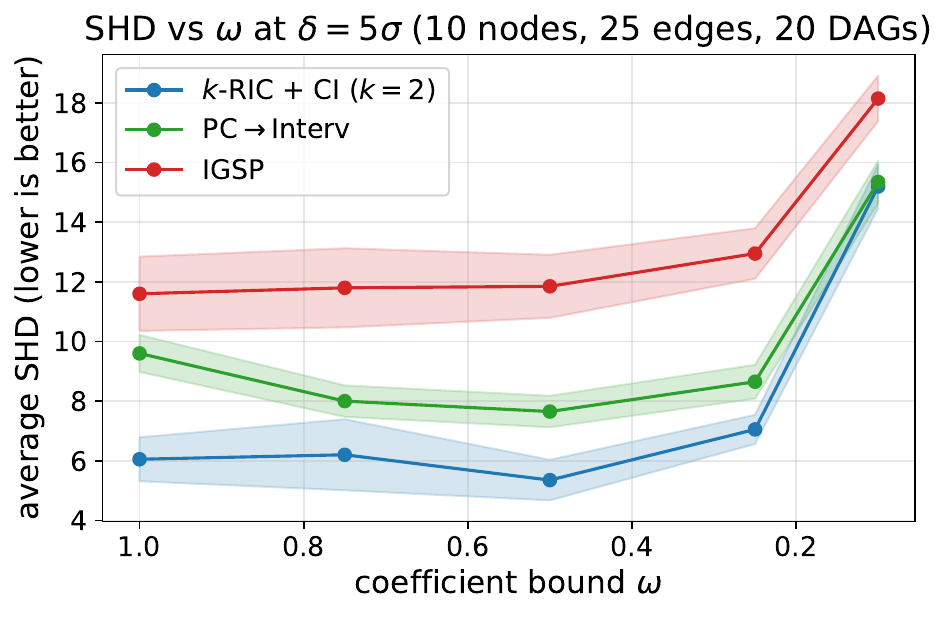}
    \caption{Average SHD versus the coefficient bound $\omega$ at fixed intervention strength $\delta = 5\sigma$ ($10$-node graphs, $25$ edges; shaded bands are $\pm 1$ standard error). PC$\to$Interv stays roughly flat because it orients edges from the interventions; IGSP degrades as $\omega$ weakens because its permutation search depends on observational CI tests; the intervention-first pipeline ($k$-RIC $+$ CI, $k=2$) is lowest throughout.}
    \label{fig:omega lineplot}
\end{figure}

\end{document}